\documentclass{ecai2014}
\usepackage{times}
\usepackage{booktabs}
\usepackage{multirow}
\usepackage{url}
\usepackage{graphicx}
\usepackage{amsmath, amssymb}
\usepackage{mathptmx}
\usepackage{cite}
\usepackage{xspace}
\usepackage{color}

\newenvironment{proof}{\noindent{\bf Proof.}\hspace*{1em}}{\literalqed\smallskip}
\def\literalqed{{\ \nolinebreak\hfill\mbox{\qedblob\quad}}}

\newcommand\qedblob{\mbox{$\square$}}

\newtheorem{mycorollary}{Corollary}

\newtheorem{mytheorem}{Theorem}

{\bf}{\it}

\newcommand{\STV} {\mbox{\sc STV}}

\newcommand{\Hyb} {\mbox{\sc Hyb}}

%%%% Common Stuff...
\newcommand{\OMIT}[1]{} % Suppress printing some part in your text.

\title{How Hard Is It to Control an Election by Breaking Ties?}
% \\ Or \\ The Complexity of Winner Determination under Parallel Universes Tie-Breaking}

\author{Nicholas Mattei\institute{NICTA and UNSW, Sydney, Australia.}
\and Nina Narodytska\institute{University of Toronto and UNSW, Toronto, ON, Canada.}
\and Toby Walsh\institute{NICTA and UNSW, Sydney, Australia} 
}

\begin{document}

\maketitle

\begin{abstract}
We study the computational complexity of controlling the result of 
an election by breaking ties strategically.
This problem is equivalent to the problem of deciding the winner of an election under
parallel universes tie-breaking.
When the chair of the election is only asked to break ties
to choose between one of the co-winners, the problem is trivially easy. 
However, in
multi-round elections, we prove that it can be NP-hard for the chair
to compute how to break ties to ensure a given result.  Additionally, 
we show that the form of the tie-breaking function can increase the opportunities for control. 
Indeed, we prove that it can be NP-hard to control an election by 
breaking ties even with a two-stage voting rule.
%\textbf{Keywords: Algorithms, Economics, Theory, Computational Social Choice, Voting, %Control, Tie-Breaking}
\end{abstract}

%\section{Introduction}
\section{INTRODUCTION}

Voting is a general mechanism to combine individual
orderings into a group preference (e.g. preferences of agents over
different plans, or rankings of web pages by
different search engines). One concern that the individual agents may have
is that the chair may manipulate the result.
For example, the chair might introduce a
spoiler candidate or delete some
votes. Bartholdi, Tovey and Trick
\cite{bttmcm92} explored
an interesting barrier to such manipulation; perhaps
it is computationally too difficult for the
chair to work out how to
perform such control? They proved that many types of
control problems are NP-hard even for simple
voting rules like plurality.

Interestingly, one type of control not considered
by Bartholdi, Tovey and Trick is control
by choosing how ties are broken. This is
surprising since the chair
is actually the person who breaks ties
in many elections.
For example, the Speaker in many parliaments has
the casting vote in case of a tied vote.
Another reason to consider such control
is that in many elections the tie breaking rule
is unspecified or is left ambiguous.
The chair therefore has an opportunity
to influence the outcome by selecting a beneficial (to him) rule.

This control problem also avoids one of
the criticisms raised against the analysis of some of the other forms of
control. In particular, many complexity results about
control %(see, for example, \cite{bttmcm92})
suppose that the chair
has complete knowledge of the votes. This might
be considered unreasonable.
For example, how do we know how voters will rank a new
spoiler candidate till their candidature has been announced?
When studying control by
breaking ties, it is natural
to suppose the chair knows how the votes are cast
when asked to break a tie.

Control by tie-breaking is equivalent to
the problem of determining if a chosen alternative can win
under \textit{some} tie-breaking rule, an idea known as parallel universes tie-breaking (PUT)
\cite{conitzer2009preference}.  As PUT does not instantiate a particular tie-breaking rule, 
but rather the set of all tie-breaking rules, 
there is no longer a dependency on the names of the individual
candidates.  This property, known as neutrality, can be restored under PUT whereas
tie-breaking off a lexicographical ordering does not allow for neutrality in the final
resolute voting rule \cite{tideman1987independence}. 
Deciding if a candidate is the winner of such
a neutral rule with ranked pairs voting has recently been shown to
be NP-complete \cite{BrillF12}; it follows that control by tie-breaking is NP-complete.
Winner determination under is PUT 
is also closely related to determining if a given alternative
has a chance to win in the presence of certain types of uncertain
information \cite{aziz2012possible,xia2011determining}

Tie-breaking has played an important role
in some of the earliest
literature on computational social
choice. For example,
Bartholdi, Tovey and Trick \cite{bartholditoveytrick}
proved that a single agent can manipulate
a Copeland election in
polynomial time when ties are broken
in favour of the manipulators, but
manipulation becomes NP-hard
when the tie-breaking rule used in chess competitions is employed.
% to compute a manipulation.
With Copeland voting,
%Faliszewski, Hemaspaandra and Schnoor \cite{copeland} proved that
%for variants of Copeland voting,
Faliszewski, Hemaspaandra, and Schnoor \cite{copeland}
proved that the choice of how ties are scored
can change the computational complexity of
computing a manipulation from polynomial
to NP-hard.
More recently work by
Obraztsova, Elkind and Hazon \cite{oehaamas11} and
Aziz et al. \cite{aziz-rv} considered
the impact of different randomized tie-breaking schemes on
the computational complexity of computing
a manipulation. They proved, for example, that
all scoring rules are polynomial to manipulate for some tie-breaking rules but not others;
additionally rules like maximin, STV and ranked pairs are NP-hard.

In this paper, we study the computational
complexity of control by breaking ties.  While ties in a real elections may not 
be that common, they have
been observed. For instance,  US Vice Presidents
have had to cast tie-breaking votes in 244 Senate votes.
Indeed John Adams, the first Vice President, cast 29 such votes.  Often elections that are not closely contested cannot be manipulated \cite{xcec08b}
and therefore, tied elections being the most closely contested of all, represent an interesting
edge case that has not been greatly investigated.
We show that when the chair only
breaks ties to choose between co-winners,
as is the case in many single round
rules, control by tie-breaking is polynomial.
On the other hand, for many multi-round
rules like Coombs, Cup, and STV,
the chair may have to break multiple ties,
and the control by tie-breaking problem is
NP-complete. Even with two-stage rules,
where the chair may have to break ties only twice,
the control by tie-breaking problem can
be NP-complete.
%We also consider tournament
%rules like Copeland where
%tie-breaking is already known to have a computational impact
%on strategic voting.
%Forcing the chair to break
%ties transitively decrease the opportunity
%for manipulation in such tournament rules.

%\section{Formal Background}\label{sec:background}
\section{FORMAL BACKGROUND}\label{sec:background}
An election is defined by a set of \emph{candidates} $C$ with $|C|=m$, 
a {\em profile} $P$ which is a set of $n$ strict linear orders (votes) over $C$, 
and a {\em voting correspondence} $R$.
Let $R$ be a function $R: P \rightarrow W$ mapping a profile
onto a set of {\em co-winners} where $W \subseteq C$.
If $|W|=1$ then we have a \textit{voting rule}, otherwise
we may require a tie-breaking rule $T$ that will
return a unique winner (single element) from
$W$.  Let $N(i,j)$ be
the number of voters preferring $i$ to $j$.
We consider the following voting rules in this study \cite{arrow:b:handbook}.

\begin{description} \itemsep=0pt
\item[Scoring rules:]
$(w_1,\ldots,w_m)$ is a vector
of weights where the $i$th candidate in a vote
scores $w_i$, and the co-winners are the candidates with
highest total score. % over all the votes.
{\em Plurality} has
$w_1=1$, and $w_i=0$ for $i>1$;
{\em veto} has
$w_i=1$ for $i<m$,
and $w_m=0$, ;
$k$-{\em approval} has
$w_i=1$ for $i\leq k$, and $w_i=0$ for $i>k$;
{\em Borda} has $w_i=m-i$.
\item[Plurality with runoff:]
If one candidate has a majority,
then she wins. Otherwise we eliminate
all but the two candidates with the most
votes and apply the plurality rule.
\item[Black's Rule:]
If one candidate is the Condorcet winner, a
candidate preferred by a majority of voters to all others, then
she wins. Otherwise, we apply the Borda rule.
\item[Bucklin:] The Bucklin 
score of a candidate is his $k$-approval score with $k$ set to be
the smallest value such that the $k$-approval score 
of at least one candidate exceeds $\lfloor n/2 \rfloor$.
The co-winners are the candidates with the largest Bucklin score.  The \textit{simplified Bucklin} procedure
is the same except that all candidates with score exceeding $\lfloor n/2 \rfloor$ are co-winners.
\item[Fallback:]
This is a combination of Bucklin and approval
voting. Voters approve and rank a subset of the candidates. 
If there is a $k$ such that the $k$-approval
score of at least one candidate, the sum of the approvals
appearing in the first $k$ places of each voter's ranked order,
 exceeds $\lfloor n/2 \rfloor$
then the co-winners are the set of candidates exceeding
this threshold. If there is no such $k$ (as no candidate
receives enough approvals), the winner is the
approval winner.
\item[Single Transferable Vote (STV):]
This rule requires up
to $m-1$ rounds. In each round,
the candidate with the least number of voters ranking
him first is eliminated until one of the remaining candidates
has a majority.
\item[Nanson and Baldwin:]
These are elimination versions of Borda voting. In each
round of Nanson, we eliminate all candidates with less than
the average Borda score. In each round of Baldwin,
we eliminate the candidate with the lowest Borda score.
\item[Coombs:]
This is the elimination version of veto voting. In each
round, we eliminate the candidate with the lowest veto score
until we have one candidate with a plurality score
of $n/2$ or greater. In the simplified
version of Coombs, we eliminate the candidate with the
lowest veto score until one candidate remains.
\item[Cup:]
Given a schedule $S$ and a labeling $L$,
we run a knockout tournament. Candidates
are compared pairwise with the winner in each moving to the next round.
The overall winner is the candidate to win the final matchup round.
\item[Copeland$^\alpha$:]
The candidates with the highest Copeland$^\alpha$ score win.
The Copeland$^\alpha$ score of candidate $i$
is $\sum_{j \neq i} (N(i,j)>\frac{n}{2}) + \alpha(\sum_{j \neq i}  (N(i,j) = \frac{n}{2}))$.
In the second order Copeland rule, %$\alpha=0$ and
if there is a tie, the winner is the candidate whose defeated
competitors have the largest sum of Copeland scores.
\item[Ranked pairs:]
We consider all pairs of candidates in order of the pairwise margin of
victory, from greatest to least.  For each considered pair, we
construct an ordering which ranks these candidates
unless it creates a cycle.
The winner is the candidate at the top of the constructed ordering.
For a non-neutral variant, when there are two or more pairwise relations with the same
amount of support, we resolve ties according to an outside ordering.
For a neutral variant,
the co-winners are any candidate who can be made top element under
some tie breaking order \cite{BrillF12}.
\item[Maximin:]
The Maximin score of a candidate is the number of votes received in
his worst pairwise election.  The co-winners are
the candidates with the largest such score.
\item[Schulze Method:] The Schulze ranking \cite{schulze2011new} of 
candidates is computed from the pairwise majority graph
where an edge between candidates $i$ and $j$ is weighted
by $N(i,j)$.  A beatpath score is 
computed for all candidates, which is the maximum weight 
path to all other candidates.  The winning set is the set of 
candidates with highest beathpath scores.
\item[Kemeny-Young:]
The Kemeny-Young rule selects the ranking
with maximal Kemeny score \cite{kemeny1959mathematics,young1995optimal}.
The Kemeny score of a ranking is measured by summing, for each candidate
pair $i, j \in C$, $N(i,j)$.  Finding 
the ranking(s) with highest Kemeny score(s) is computational hard to compute when $m \geq 4$ 
\cite{bartholdi1989voting,hemaspaandra2005complexity}.
\end{description}

A \emph{tie-breaking} rule $T$ for an election is a single
valued choice function that, for any subset $W \subseteq C$, $W \neq \emptyset$, and profile $P$,
$T(P, W)$ returns a single element $c \in W$ \cite{oehaamas11}.
Commonly, $T$ is a strict linear order over $C$ that is provided aprori (e.g. by age or alphabetically).
However, this definition allows us to represent functions that are 
not necessarily a linear order over the candidate set.
This includes functions that are not
transitive for all the candidates, and,   
while this can be seen as an undesirable
propriety in elections, it is a common element in sports 
competitions (e.g., NCAA Football) where aspects like goal
differential and total points scored are used as non-transitive tie-breaking functions.
We consider the following
decision problem.

\smallskip
\noindent \textbf{Name:} \textsc{Control by Tie-Breaking} \\
\noindent \textbf{Question:} Given profile $P$ and preferred candidate $p \in C$,
is there a tie-breaking rule $T$ such that $p$ can be made the unique winner
of the election under voting rule $R$?

\smallskip
A voting rule is {\em vulnerable} to such
control if this problem is polynomial,
and {\em resistant} if it is NP-hard.
All voting rules that require tie-breaking at some point, which includes all voting rules
presented in this section,
are \textit{susceptible} to this form of control.
%Additionally, a rule can be \textit{immune} to a form of control if
%there exists no profile where the given form of control is capable of changing the winner,
%conversely a rule is \textit{susceptible} if there exists a profile
%that can be changed.

In the {\em manipulation problem} \cite{bttmcm92},
we wish to decide if we can cast
one additional vote to make $p$
win. All our results here apply to
the variants of the manipulation problem in which we
break ties in favour of or
against the manipulator. Finally,
in the {\em manipulation problem with random
tie-breaking} \cite{aziz-rv,oehaamas11}, we are
also given a probability
$t$ and we wish to decide if we can cast
one additional vote to make $p$ the
winner with probability at least $t$
supposing ties are broken uniformly at
random between candidates.

%\section{Relationship to Manipulation}
\section{RELATIONSHIP TO MANIPULATION}
We start by considering how control by breaking
ties is related to other manipulation problems.
A little surprisingly, the %computational
complexity
of control by breaking ties is not related
to that of the manipulation problem
with random tie-breaking or the standard manipulation problem
when ties are broken in a fixed order.

\begin{mytheorem}
There exists a voting correspondence such that
the control by tie-breaking problem is polynomial
but the manipulation problem with random tie-breaking
is NP-complete (and vice versa).
\end{mytheorem}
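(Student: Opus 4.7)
My plan is to prove both separations by exhibiting two different voting correspondences, one for each direction. For the direction where control by tie-breaking is in P but manipulation with random tie-breaking is NP-complete, I would use a single-round rule such as maximin. The co-winner set $W = R(P)$ is polynomial-time computable from the pairwise matrix, so the control by tie-breaking problem trivially reduces to the membership test $p \in W$. In contrast, Aziz et al.~\cite{aziz-rv} have shown that the manipulation problem with uniform random tie-breaking is NP-complete for maximin, which gives the separation.

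For the reverse direction, I would construct an artificial voting correspondence that embeds 3SAT into the definition of the co-winner set. Over a sufficiently rich candidate set $C$, one candidate is reserved for each possible truth assignment to a block of $k$ variables, and a distinguished portion of the profile acts as a marker that encodes a 3SAT formula $\varphi$ over those variables. When the marker is present, I define $R(P)$ to be the set of candidates whose assignments satisfy $\varphi$; when the marker is absent, I set $R(P) = C$. The marker must be fragile: it should depend on a feature of the profile, such as the exact number of votes or a hash of the vote multiset, that any single additional vote necessarily disturbs.

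With this correspondence, control by tie-breaking on a marked profile is NP-hard, because deciding whether a designated candidate $p$ can be made the unique winner amounts to deciding whether the encoded formula has a satisfying assignment corresponding to $p$, which is a straightforward reduction from 3SAT. Conversely, after the manipulator's vote is added, the marker is always destroyed, so $W = C$ and uniform random tie-breaking awards $p$ the election with probability exactly $1/|C|$. The manipulation problem with random tie-breaking therefore reduces to comparing the threshold $t$ to $1/|C|$, and is polynomial.

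The hard part will be designing the marker and the encoding carefully, so that the reduction from 3SAT is genuine (thus giving NP-hardness of control) while simultaneously guaranteeing that the marker is broken by any single added vote (so that manipulation collapses to the trivial case on every input). Verifying that the construction is a legitimate voting correspondence in the sense of Section~\ref{sec:background}, in particular that $R(P) \neq \emptyset$ for every profile, is a small but necessary sanity check.
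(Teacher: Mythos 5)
Your first direction is sound and mirrors the paper's: the paper uses Copeland (polynomial control by Corollary~\ref{cor:copeland}, NP-complete manipulation under random tie-breaking by Obraztsova and Elkind), while you use maximin (polynomial control by Theorem~\ref{thm:easy}, NP-hardness of randomized-tie-breaking manipulation from the cited literature). Either rule works; this half is fine.

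The reverse direction, however, has a genuine gap. Your construction is a \emph{single-round} correspondence: the chair selects once from the co-winner set $W$, so control by tie-breaking is exactly the membership test $p \in W$. This forces a dilemma. If each candidate encodes a full truth assignment to $k$ variables, you need $2^k$ candidates, so a polynomial reduction only permits $k = O(\log |C|)$ variables, and such formulas are decidable by brute force in polynomial time --- no NP-hardness follows. If instead candidates encode partial assignments (or $W$ is defined by satisfiability of $\varphi$), then $R$ itself is NP-hard to \emph{compute}, which smuggles the hardness into the correspondence rather than into the tie-breaking; for any polynomial-time computable single-shot correspondence, Theorem~\ref{thm:easy} already shows control by tie-breaking is polynomial, so no such correspondence can witness the separation. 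The missing idea is that resistance requires the chair to make \emph{many sequential} tie-breaking decisions whose combinations blow up exponentially. The paper's witness is the two-stage rule that eliminates half the candidates by veto and then runs plurality: the veto round produces one enormous tie, the chair's sequential choices of which candidates survive encode an \textsc{X3C} cover (Theorem~\ref{thm:vetoplurality}), yet manipulation (random tie-breaking or otherwise) stays polynomial because a single added vote has only $m(m-1)/2$ relevant forms (first and last positions) that can be tried exhaustively. Your ``fragile marker'' device is not needed once the rule is chosen this way, and on its own it cannot rescue the construction because the NP-hardness step fails first.
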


\begin{proof}
In Corollary \ref{cor:copeland}, we prove that the control by
tie-breaking problem for Copeland is polynomial.
On the other hand, Obraztsova and Elkind \cite{oeijcai11} prove that the manipulation
problem with random tie-breaking for Copeland is NP-complete.

Consider the voting rule that eliminates half the candidates
using the veto rule, then elects the plurality
winner. In Theorem \ref{thm:vetoplurality}, we prove that the control by
tie-breaking problem for this rule is NP-complete.
However, the manipulation
problem with random tie-breaking for this
rule is polynomial
since we can exhaustively try all
$m(m-1)/2$ votes with different candidates
in the first and the last position.
%, and compute the probability
%that our distinguished candidate wins in each case.
\end{proof}

%The computational complexity
%of control by breaking ties is also not related
%to that of the manipulation problem
%(where ties are broken in a fixed order, in favour
%or against the manipulator).

\begin{mytheorem}
There exists a voting correspondence such that
the control by tie-breaking problem is polynomial
but the manipulation problem
is NP-complete (and vice versa).
\end{mytheorem}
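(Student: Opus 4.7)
The plan is to mirror the structure of the preceding theorem and supply a separate witnessing voting correspondence for each direction of the claim.

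For the direction in which control by tie-breaking is polynomial but the manipulation problem is NP-complete, I would take Copeland (more precisely, the second-order variant, or equivalently Copeland paired with the chess-style tie-handling scheme for pairwise contests). Corollary \ref{cor:copeland} already gives a polynomial algorithm for the control-by-tie-breaking problem on Copeland, and the argument there is insensitive to how pairwise ties among defeated competitors are broken internally. On the other hand, Bartholdi, Tovey and Trick \cite{bartholditoveytrick} established that single-voter manipulation of Copeland becomes NP-complete precisely when the chess-style tie-handling scheme (equivalently, second-order Copeland) is used for pairwise contests. Combining these yields the first direction.

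For the reverse direction, I would re-use the composite rule from the proof of the previous theorem: eliminate half of the candidates by veto, then return the plurality winner among the survivors. Theorem \ref{thm:vetoplurality} already states that control by tie-breaking is NP-complete for this rule. To see that single-voter manipulation is polynomial, note that both veto (first stage) and plurality (second stage) depend only on the top- and bottom-ranked candidate of each ballot, so the manipulator's middle positions are irrelevant. Hence there are only $O(m^2)$ effectively distinct manipulator ballots, and each can be evaluated in polynomial time by simulating the two stages and consulting the fixed tie-breaking order in favour of, or against, the manipulator as required.

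The main obstacle is checking that the scope of Corollary \ref{cor:copeland} actually covers the specific Copeland variant for which Bartholdi, Tovey and Trick's manipulation hardness holds. If the corollary is stated only for a restricted tie-handling scheme, the rule witnessing the first direction should be adjusted (for example, by layering second-order Copeland as an outer tie-breaking mechanism over a rule with trivially polynomial control by tie-breaking) so that both properties hold of a single voting correspondence.
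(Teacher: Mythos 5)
Your second direction coincides with the paper's: the two-stage veto-then-plurality rule, with manipulation solved by enumerating the $O(m^2)$ effectively distinct ballots determined by the manipulator's last-placed candidate (veto stage) and top-placed survivor (plurality stage); Theorem~\ref{thm:vetoplurality} supplies the hardness of control. For the first direction the paper instead uses Nanson: control by tie-breaking is polynomial by Theorem~\ref{thm:easy}, since Nanson only ever breaks a tie once, among the final co-winners, while manipulation is NP-complete by \cite{nwxaaai11}. Your witness, second-order Copeland, can be made to work, but not by the route you cite, and this is the one step in your argument that does not go through as written: Corollary~\ref{cor:copeland} concerns \emph{regular} Copeland where the chair resolves pairwise majority ties in the tournament graph (possibly non-transitively); it says nothing about the second-order variant, and the chess-style scheme of \cite{bartholditoveytrick} is a tie-breaker applied to tied \emph{Copeland scores}, not to pairwise contests, so the two results you are combining speak about different tie-breaking mechanisms and do not obviously attach to a single voting correspondence. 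The clean repair --- which your final paragraph gestures at --- is to note that second-order Copeland, viewed as a correspondence, only ever asks the chair to make one final selection from the set of co-winners, so Theorem~\ref{thm:easy} applies directly and gives polynomiality of control, while \cite{bartholditoveytrick} gives NP-completeness of manipulation. With that substitution (or with the paper's Nanson witness), your proof is complete; what each approach buys is essentially a matter of taste, since both ultimately reduce the easy direction to the ``select once from the co-winners'' observation of Theorem~\ref{thm:easy}.
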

\begin{proof}
In Theorem \ref{thm:easy}, we prove that the control by
tie-breaking problem for Nanson is polynomial.
On the other hand, the manipulation problem
for Nanson is NP-complete \cite{nwxaaai11}.

Consider again the voting rule that eliminates half the candidates
using the veto rule, then elects the plurality
winner. In Theorem \ref{thm:vetoplurality}, we prove that the control by
tie-breaking problem for this rule is NP-complete.
However, the manipulation
problem is polynomial
since we can exhaustively try all
$m(m-1)/2$ votes with different candidates
in first and last position.
\end{proof}

%These results indicate that we must search for
%properties other than a direct link that may shed
%light on the relationship between the single manipulator
%problem with the control by tie-breaking problem.

%\section{Selecting from the Winning Set}
\section{SELECTING FROM THE WINNING SET}
We start with some very simple cases.
When tie-breaking only ever takes place
once and at the end, then the chair is choosing between the co-winners.
In such cases, control by breaking ties is
trivially polynomial. The chair
can ensure a candidate $p$
wins if and only if $p$ is amongst the
co-winners.

\begin{mytheorem}\label{thm:easy}
The control by tie-breaking problem when we select 
from among a set of co-winners once is polynomial.
\end{mytheorem}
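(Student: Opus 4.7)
The plan is to reduce the control problem to a single winner-set computation followed by a membership check. Since the hypothesis is that tie-breaking occurs exactly once and only at the very end, the chair's sole opportunity to influence the outcome is to pick some element out of the final co-winner set $W$ produced by the voting correspondence $R$ on the input profile $P$.

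First I would observe that for any tie-breaking rule $T$, the unique winner produced on input $(P, W)$ satisfies $T(P, W) \in W$ by definition of a choice function. Thus a necessary condition for the preferred candidate $p$ to become the unique winner is $p \in W = R(P)$. Conversely, if $p \in W$, then I can exhibit a tie-breaking rule that selects $p$, for instance any linear order over $C$ that puts $p$ at the top; evaluating this $T$ on $(P, W)$ returns $p$ and witnesses a positive instance of \textsc{Control by Tie-Breaking}.

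Combining these two directions gives the characterization: $p$ is controllable by tie-breaking if and only if $p \in R(P)$. The algorithm is therefore to compute $R(P)$ and test membership of $p$; this runs in polynomial time whenever $R$ itself is polynomial-time computable, which is the assumed setting for the voting correspondences considered. There is no real obstacle here; the only subtlety worth flagging is that the argument depends essentially on the assumption that the rule performs tie-breaking just once and at the end, so that the chair does not need to commit to tie-breaking choices before knowing which co-winner set will arise, an assumption that will fail for the multi-round rules treated in the subsequent sections.
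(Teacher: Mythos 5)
Your proposal is correct and matches the paper's argument exactly: the chair can make $p$ the unique winner if and only if $p \in R(P)$, so the problem reduces to computing the co-winner set and checking membership, which is polynomial whenever $R$ is. Your explicit caveat that $R$ itself must be polynomial-time computable is a worthwhile clarification the paper only makes implicitly (e.g.\ when excluding Kemeny-Young for $m \geq 4$).
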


Theorem~\ref{thm:easy} covers the majority of 
voting rules presented in Section~\ref{sec:background}.  Specifically, control by tie-breaking
is easy for: 
\begin{itemize}
\item All scoring rules, Bucklin, Black, maximin,  and Copeland$^{\alpha}$ for any $\alpha$ 
are polynomial.
\item Plurality with runoff is polynomial since only $O(m)$ candidates can enter
the runoff with the candidate we wish to win, so we can try all possibilities.
%\item Black's rule cannot have a tie in the first stage as there
%is either a Condorcet winner or not, in the second stage
%we need only break ties among candidates with the highest Borda score.
\item Fallback is polynomial, this is interesting as it holds the current record of 
resistance to 20 of the 22 methods of control \cite{fallback}.
\item Nanson's rule is a multi-round rule where manipulation is NP-complete \cite{nwxaaai11}.  However, control by tie-breaking is polynomial. Since Nanson's rule eliminates all alternatives with less than the average Borda score, the only time that it breaks ties is in the final round when multiple candidates have the maximal Borda score.
\item Schulze Method is also polynomial. In the two common implementations, edge(s) between candidate $i$ and $j$ is
either $N(i,j)$ or the margin of votes $N(i,j) - N(j,i)$.  In the former case, even if these 
numbers are tied it does not imply a tie in the outcome ordering of $i$ and $j$.
In the latter case, if there is a tie then the edge appears in the graph with $0$ weight.  
The only possible tie in the method occurs when two candidates have the same beatpath 
score.
Officially Schulze rule requires that an order is drawn at random from $P$ and ties are resolved according to this order.  If we assume the
chair can select a ballot then he just selects the ballot that is closest to his true preference.
\item For the Kemeny-Young method with $m \leq 3$, the only ties that can occur are between pairs of elements in the outcome ordering,
we need to select a resolution of these pairs such that $p$ wins.  This can be computed in polynomial time through brute force
computation as the number of possible resolutions of the pairwise ties is polynomial.  Note that the problem is trivially
hard for instances where $m \geq 4$.
\end{itemize}

To have any resistance to control by
tie-breaking, we
need more complex tie-breaking.
One place to see more complexity %in tie-breaking
is with multi-round rules like
STV and Coombs in which
candidates are successively eliminated.
Such rules increase the number of times
ties may need to be broken.

The Copeland rule offers another interesting
control opportunity for the chair.
The chair might be in a position to
set $\alpha$, the score that a candidate
receives in the event of a tie in the tournament
graph. The choice of $\alpha$ has an
impact on the computational complexity
of computing a manipulation \cite{copeland,faliszewski2009llull}.
%However, the control problem of choosing the best value of
%$\alpha$ to make a given candidate win is polynomial.
What happens when we hand the choice
of $\alpha$ over to the manipulator? He just needs
to find $\forall c \in C:
wins(p) + \alpha \cdot ties(p) \geq wins(c) + \alpha \cdot ties(c)$. Where $wins(p)$ is 
the number of points received for wins and $ties$ is the number of points received 
for tied competitions.  Since
$\alpha$ must be a rational number between $0 \leq \alpha \leq 1$ we can find it quickly 
with a short linear program.

\begin{mytheorem}
The control problem of setting $\alpha$ for
Copeland\textsuperscript{$\alpha$} is polynomial.
\end{mytheorem}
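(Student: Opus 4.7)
The plan is to observe that choosing $\alpha$ reduces to solving a one-variable linear feasibility problem, and such a problem can be decided in polynomial time by a direct interval-intersection argument (no general LP machinery is even needed).

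First I would compute, for every candidate $c \in C$, the quantities $wins(c) = |\{j \neq c : N(c,j) > n/2\}|$ and $ties(c) = |\{j \neq c : N(c,j) = n/2\}|$. Both can be read off in $O(m^2 n)$ time from the profile $P$. The Copeland$^\alpha$ score of $c$ is then the affine function $s_c(\alpha) = wins(c) + \alpha \cdot ties(c)$ of the single parameter $\alpha$.

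Next, for $p$ to be made (a co-)winner, we require $s_p(\alpha) \geq s_c(\alpha)$ for all $c \neq p$ together with $\alpha \in [0,1]$. Each such constraint rearranges to
\begin{equation*}
(wins(p) - wins(c)) + \alpha\bigl(ties(p) - ties(c)\bigr) \geq 0,
\end{equation*}
which, depending on the sign of the coefficient $ties(p) - ties(c)$, is either trivially true/false, a lower bound $\alpha \geq L_c$, or an upper bound $\alpha \leq U_c$. I would take $L = \max(0, \max_c L_c)$ and $U = \min(1, \min_c U_c)$ and answer \textbf{yes} iff $L \leq U$; this is clearly polynomial. If the variant of the problem requires $p$ to be the \emph{unique} winner, I would replace $\geq$ by $>$ throughout and handle strict bounds in the obvious way, which only changes whether the endpoints of intersecting intervals are included.

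There is essentially no hard step. The only thing to be careful about is the definition of ``winning'' used in the problem statement: the hint given just before the theorem writes the condition with a weak inequality, meaning it suffices for $p$ to be among the co-winners (a subsequent standard tie-break on the co-winner set, which by Theorem~\ref{thm:easy} is itself easy, can then select $p$). Either reading yields the same polynomial-time algorithm described above, so no additional argument is required.
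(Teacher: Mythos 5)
Your proposal is correct and takes essentially the same approach as the paper: express each candidate's Copeland$^\alpha$ score as an affine function of $\alpha$ and decide feasibility of the resulting one-variable linear system over $[0,1]$ (the paper phrases this as "a short linear program," while you solve it by direct interval intersection, which is just a more elementary way to do the same thing). Your added care about weak versus strict inequalities for co-winner versus unique-winner is a small but welcome clarification the paper glosses over.
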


%\begin{proof}
%We need to solve the following inequality:
%$\forall c \in C:
%wins(p) + \alpha \cdot ties(p) \geq wins(c) + \alpha \cdot ties(c)$.
%This observation works for the unique winner version as well when change from $\geq$ to
%$>$.
%Since $\alpha$ must be a rational number between $0 \leq \alpha \leq 1$ we can find
%this range quickly.
%\end{proof}

%\section{Breaking Ties During Execution}
\section{BREAKING TIES DURING EXECUTION}
We now move to multi-round voting rules.
Bartholdi and Orlin \cite{stvhard} showed that
the manipulation problem for STV
is NP-complete; Conitzer et al.~\cite{conitzer2009preference}
showed that the winner determination under PUT for STV, and therefore control by tie-breaking, is
NP-complete.

%\begin{mytheorem}[Conitzer et al.~\cite{conitzer2009preference}]
%		\label{l:l1}
%The control by tie-breaking problem for $\STV$ is NP-complete.
%\end{mytheorem}		

%\begin{proof}
%We reuse the reduction from~\cite{stvhard}.
%We need to show that the same construction
%can be used to prove hardness of control by tie-breaking problem.
%In particular, we need to
%We show that the influence
%of the single manipulator
%can be  simulated by the
%tie-breaking rule.
%
%It is easy to see that the profile constructed
%in this reduction has the following property:
%during the first $3m$ rounds, the manipulator can \emph{only}
%change the outcome
%at rounds $p \in \{1,4,7,\ldots,3m\}$, where exactly two candidates
%are tied. We denote these candidates $f$ and $g$.
%The manipulator can give one point to one of these two tied candidates,
%so that he decides which of them is eliminated first. As $f$ and $g$ are tied,
%we can simulate the manipulator's influence using the tie-breaking rule.
%n If this rule ranks $f$ above $g$
%then $f$ is eliminated before $g$, otherwise $g$ is eliminated before $f$.
%
%The elimination order starting from round $3m+1$ onwards, is independent of the
%manipulator's vote. Hence, the distinguished candidate $c$ wins elections iff the tie-breaking rule effectively selects a cover during the
%first $3m$ rounds.
%
%Hence, %deciding if a preferred candidate can be
%made the winner of the election by breaking ties appropriately
%control by breaking ties
%is NP-complete.
%\end{proof}

\subsection{Baldwin and Coombs rules}

We next consider Baldwin and Coombs's voting
rules. These are multi-round rules that
successively eliminate candidates based on
their Borda or Veto scores, respectively.
The manipulation problem for Baldwin's rule
is NP-complete and we can modify the proof given
by Narodytska et al.~\cite{nwxaaai11}.

\begin{mytheorem}
The control by tie-breaking problem for Baldwin's rule is NP-complete.
\end{mytheorem}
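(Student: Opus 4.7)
The plan is to establish NP-membership and then NP-hardness separately. For membership, a polynomial-size certificate is the sequence of candidates eliminated at each of the at most $m-1$ rounds. A verifier simulates Baldwin's rule, checking at each round that the proposed eliminated candidate currently has minimum Borda score among the surviving candidates, and finally that the preferred candidate $p$ remains as the unique survivor. Simulation and checking run in polynomial time in the size of the profile.

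For NP-hardness, I would adapt the reduction of Narodytska et al.\ for the Baldwin manipulation problem, reducing from a suitable NP-hard source (their proof uses a 3-Cover / exact-cover style instance). In the original manipulation reduction, the single manipulator vote is used to tilt Borda scores at critical moments, forcing a specific elimination among otherwise near-tied candidates. In our setting we have no extra vote, but we gain the ability to resolve ties directly. The idea is therefore to design the profile so that exactly at each ``decision round'' the set of candidates with minimum Borda score is precisely the set of alternatives corresponding to a choice in the source instance (e.g.\ the 3-sets covering some element), and the chair's tie-break selects one of them. Between decision rounds, fillers and extra voters are added so that the next candidate to be eliminated is uniquely determined (no tie), propagating score changes in a controlled way to set up the next decision round.

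The correctness of the reduction reduces to showing that $p$ survives all rounds if and only if the chair's sequence of tie-breaking choices encodes a valid solution of the source instance. One direction is straightforward: a valid solution gives a schedule of tie-breaks that keeps $p$'s Borda score strictly above the elimination threshold at every round. The other direction requires arguing that any sequence of tie-breaks that leaves $p$ alive must respect the combinatorial structure of the source instance --- typically by making $p$'s survival at a later round contingent on the choices made earlier (each wrong choice permanently raises someone's score, or lowers $p$'s relative score, below survival).

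The main obstacle will be the bookkeeping of Borda scores across rounds, since eliminating a candidate shifts the scores of all remaining candidates in every vote in which the eliminated candidate was not last. I would attack this by starting from the Narodytska et al.\ gadgetry and dropping the manipulator vote, then rebalancing the base profile by a small number of additional voters so that (i)~the critical rounds become exact ties rather than near-ties, and (ii)~the non-critical rounds have a unique minimum. A secondary subtlety is arguing that the chair does not gain extra power from unintended ties at non-decision rounds; this is handled by adding $O(1)$ ``padding'' voters per candidate to break any accidental ties that the removal of the manipulator vote might create, while preserving the intended tie structure at decision rounds.
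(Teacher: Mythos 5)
Your proposal matches the paper's proof essentially step for step: the paper also establishes hardness by taking the Narodytska et al.\ Baldwin-manipulation reduction from \textsc{X3C}, keeping the score-propagation gadgetry ($P_1$) intact, and rebalancing the initial score-setting votes ($P_2$) so that the critical candidates are exactly tied with $p$ at each decision round, thereby transferring the manipulator's power to the chair's tie-breaks. Your membership argument and your identification of the two main subtleties (score bookkeeping under elimination, and preventing unintended ties at non-decision rounds) are exactly the points the paper's construction is engineered to handle, so the approach is sound and essentially identical.
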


\begin{proof}\emph{(Sketch)}
We modify the NP-completeness proof for Baldwin manipulation \cite{nwxaaai11}
to move the burden of finding the exact cover from the manipulator and onto the tie-breaking
rule. The chair will set the tie-breaking order
such that we select exactly a subset of sets that give us an exact cover in an instance of
\textsc{Exact Cover by 3 Sets} (\textsc{X3C}).
%
%Given an instance of \textsc{X3C} we construct an election
%with Baldwin's rule
%similar to \cite{nwxaaai11}.
Given two sets $V = \{v_1, \dots, v_q\}$, $q = 3t$, and $S = \{S_1, \dots, S_t\}$,
where $t \geq 2$ and for all $j \leq t$, $|S_j| = 3$, and $S_j \subseteq V$ we create an instance
with the set of candidates $C = \{p,d, b\} \cup V \cup A$. Note that $p$ is the preferred candidate,
members of $A = \{a_1, \dots, a_t\}$ correspond to the 3-sets in $S$, and $m = |C| = q + t + 3$.
The construction is made up of two parts.  The first set of votes $P_1$ remains unchanged
from Narodytska et al.~\cite{nwxaaai11} and is used to control changes in the score difference between candidates
as they are eliminated.
The second set of votes $P_2$, which are the votes
that set the initial score differences between the candidates, are modified so
that the dangerous candidates are tied with $p$ (rather than having one more vote than $p$).

We will make use of the votes $W_{(u,v)} = \{(u \succ v \succ Others), (rev(Others) \succ u \succ v)\}$, where $Others$
are all the candidates in $C \setminus \{u,v\}$ in lexicographical order.  Votes of this form (1)
give $m$ points to $u$, $m-2$ points to $v$, and $m-1$ points to all other candidates; and (2)
have the property that for any set of candidates $C' \subseteq C$ and any pair of candidates
$x,y \in C \setminus C'$ if $x = v$ and $u$ is removed, the score of $v$ to increase by $1$,
if $x = u$ and $v$ is removed, the score of $u$ decreases by $1$, otherwise the scores
are unchanged.  
These votes allow us to construct a profile such that removing
candidates in a particular order creates ties in the next round.

The set of votes $P_1$ is the machinery that creates a series of ties that we must
select from and is unchanged from Narodytska et al.~\cite{nwxaaai11}.
%Let $W_{(u,v)} = \{(u \succ v \succ Others), (rev(Others) \succ u \succ v)\}$, where $Others$
%are all the candidates in $C \setminus \{u,v\}$ in lexicographical order, and 
Let $s_{base}(P_1) = m(6mt +mq + m(t+6))$.  

The votes
in $P_1$ are:
\begin{itemize}
  \item  for each $j \leq t$ and each $v_i \in S_j$ there are $2m$ copies of $W_{(v_i, a_j)}$;
  \item for each $i \leq q$, there are $m$ copies of $W_{(b,v_i)}$;
  \item $m(t+6)$ copies of $W_{(b,p)}$.
\end{itemize}

The set of votes $P_2$ set the initial score differences between the candidates.
Let $s_{base}(P_2) = m(m(7t + 5 - q) + (mt^2) + 2m(t+6))$.  
%In $P_2$ there are
%for each $i \leq q$, there are $2m \cdot occ(i) + mt + 4m$ copies of $W_{(d,v_i)}$;
%for each $j \leq t$, there are $mt$ copies of $W_{(d, a_j)}$;
%and $2m(t+6)$ copies of $W_{(d,b)}$.

\begin{itemize}
  \item  for each $i \leq q$, there are $2m \cdot occ(i) + mt + 4m$ copies of $W_{(d,v_i)}$;
  \item for each $j \leq t$, there are $mt$ copies of $W_{(d, a_j)}$;
  \item $2m(t+6)$ copies of $W_{(d,b)}$.
\end{itemize}

This gives the candidates the following scores for the votes in $P_2$:
\begin{align*}
&s(v_i,P_2) = s_{base}(P_2) - (2m \cdot occ(i) + mt + 4m) \\
&s(a_j,P_2)= s_{base}(P_2) - (mt) \\
&s(p,P_2) = s_{base}(P_2) \\
&s(b,P_2) = s_{base}(P_2) -  2m(t+6) \\
&s(d,P_2) = s_{base}(P_2) + m(7t+5-q) + (mt^2) + 2m(t+6).
\end{align*}
\noindent

This modification gives us the following combined Borda scores for 
all the candidates (assuming $s_{base} = P_1 \cup P_2$):
%$s(v_i,P) = s_{base}(P) - m(t+5)$; 
%$s(a_j,P) = s_{base}(P) - m(t+6)$;
%$s(p,P) = s_{base}(P) - m(t+6)$;
%$s(b,P) = s_{base}(P) +mq - m(t+6)$;
%$s(d,P) = s_{base}(P) + m(7t+5-q) + (mt^2) + 2m(t+6)$.
%
%
\begin{align*}
&s(v_i,P) = s_{base}(P) - m(t+5) \\
&s(a_j,P) = s_{base}(P) - m(t+6) \\
&s(p,P) = s_{base}(P) - m(t+6) \\
&s(b,P) = s_{base}(P) +mq - m(t+6) \\
&s(d,P) = s_{base}(P) + m(7t+5-q) + (mt^2) + 2m(t+6).
\end{align*}

Now all candidates in $A$ are tied with $p$ in the first round and therefore 
the tie-breaking rule must choose one to remove in each round.  In round
$4k = 0, \dots, q/3$ we must select some set of candidates $a_1, \dots, a_{q/3}$ to eliminate
(in the interleaving rounds $4k+1, 4k+2, 4k+3$, the elements $v_i$ in the set $S_j$ corresponding
to $a_j$ will drop out).  At each $4k$, $p$ will be tied with the remaining candidates in $A$ which correspond
to sets $S$ until there are no more
sets to cover (after $4q/3$ rounds).  Then $p$ will be tied with $b$ if and only if we have eliminated a cover
 and the remaining $a_j$ that were not part of the cover.  We then select $p$ to win over $b$.
There is a solution to the
X3C instance if and only if there is a selection of $q/3$ elements of $A$ that exactly cover the elements
of $V$.
\end{proof}
 % and the rest of the proof proceeds
%as in \cite{nwxaaai11}.

%The proof of correctness is the nearly the same as in \cite{nwxaaai11}.
%We must select
%exactly a cover of the the $v_i$'s with our first $q/3$ tie-breaks.  If we did not select a cover, then
%the remaining elements of $v_i$ that were not covered would have Borda score $m$ and would be unreachable
%through the tie break rule.  We cannot cover the same $v_i$ twice as this would cause some $a_j$'s score
%to increase by $2m$ and then this $a_j$ could not be beaten by $p$.  Therefore, there is a solution to the
%X3C instance if and only if there is a selection of $q/3$ elements of $A$ that exactly cover the elements
%of $V$.

We move on to the Coombs rule which 
successively eliminates the candidate with 
the largest number of last place votes.
%We finish our consideration of multi-round
%rules
%with Coombs rule which
%successively eliminates the candidate with the
%largest number of last place votes.
%The simplified version of Coombs
%rule stops when
%one candidate remains, whilst
%the unsimplified version stops
%when one candidate has a majority
%of first place votes.

\begin{mytheorem}
		\label{l:l2}
The control by tie-breaking problem for
Coombs rule is NP-complete.
\end{mytheorem}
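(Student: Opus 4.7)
The plan is to establish membership in NP by the standard certificate-and-simulation argument (guess the sequence of up to $m-1$ tie-breaking choices, one per elimination round, and simulate Coombs on the profile), and then prove hardness by a reduction from \textsc{Exact Cover by 3-Sets}, following the same template used for the Baldwin theorem above: start from the existing NP-completeness proof for Coombs \emph{manipulation} (Narodytska et al.) and transfer the combinatorial difficulty from the manipulator's single strategic vote onto the chair's choice of tie-breaking.

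Given an X3C instance with universe $V=\{v_1,\dots,v_q\}$ (with $q=3t$) and sets $S=\{S_1,\dots,S_t\}$, I would introduce candidates $C=\{p,b,d\}\cup V\cup A$ where $A=\{a_1,\dots,a_t\}$ represent the sets, and construct two groups of votes $P_1$ and $P_2$. The crucial design principle for Coombs is that last-place counts change only when a candidate's current last slot is eliminated, so the building blocks must be pairs of votes that pin specific candidates into the bottom two positions. Concretely, I would use vote pairs analogous to the $W_{(u,v)}$ gadget but placing $(u,v)$ at the bottom (with $u$ last, $v$ second-to-last) so that eliminating $u$ transfers a last-place vote to $v$, whereas eliminating any third candidate leaves the veto scores of $u$ and $v$ untouched. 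Stacking copies of such gadgets, I would set initial veto scores so that all $a_j\in A$ are tied with $p$ for the most last-place votes, and ensure that eliminating a set-candidate $a_j$ raises the last-place counts of exactly the three element-candidates $v_i\in S_j$ by one each, while leaving all other $v_i$'s untouched. By choosing the base offsets so a $v_i$ needs to be "hit" exactly once to drop to the elimination threshold, the simulation proceeds in blocks: in each block the chair picks some $a_j$ to eliminate (tie-break), and the three $v_i\in S_j$ are then forced out in the next rounds; after $q/3$ blocks, the remaining $t-q/3$ unpicked $a_j$'s can be cleared only if every $v_i$ has already been hit, i.e.\ if and only if the chair's choices formed an exact cover. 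The candidates $b$ and $d$ play the role they do in the Baldwin construction: $d$ is a "high-scoring" candidate whose constant gap prevents spurious majority wins before the elimination cascade finishes, and $b$ is kept alive just long enough to be beaten by $p$ in a final tie if and only if the cover succeeded.

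The forward direction is then routine: an exact cover yields a tie-breaking order that produces the elimination schedule above and makes $p$ the unique winner. For the reverse direction, I would argue structurally that any winning tie-breaking order must eliminate exactly $q/3$ members of $A$ before any element in $V$ is removed (because the designed last-place counts force $A$-candidates to be tied at the bottom until an $a_j$-elimination pushes some $v_i$'s below them), and that if the selected $a_j$'s do not form a cover then some $v_i$ retains its original low last-place count, causing the elimination order to deviate in a way that lets $b$ (or $d$) accumulate a premature majority and preempt $p$.

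The main obstacle will be the last-place arithmetic: unlike Borda, where eliminating a candidate shifts every vote's scores predictably, Coombs only reacts to eliminations of the current last element of each vote, so the gadget must keep the "right" candidate sitting at the bottom of each vote through many rounds. Getting the $P_2$ offsets and the multiplicities of the $W_{(u,v)}$-style gadgets balanced so that (i) the initial tie is exactly among $A\cup\{p\}$, (ii) each $a_j$-elimination deposits exactly one last-place vote on each $v_i\in S_j$ and nothing elsewhere, and (iii) no extraneous tie or premature majority appears for $b$, $d$, or $p$ during the cascade, is the delicate bookkeeping that the full proof will have to carry out.
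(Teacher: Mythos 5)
Your high-level strategy --- shift the combinatorial burden from the manipulator's single vote onto the chair's tie-breaking choices --- is exactly the idea the paper uses, but your route to it is different. The paper does not build a fresh X3C gadget for Coombs; it starts from the profile $E$ in Theorem 3 of Davies et al.\ (the NP-hardness proof for Coombs \emph{manipulation}, which you misattribute to Narodytska et al.\ --- that is the Baldwin/Borda paper) and makes a minimal modification: it increases the initial veto scores of two specific candidates ($s_2$ and $d_0$) by one each, so that every place where the manipulator's vote mattered becomes a genuine tie that the chair can resolve instead. The paper then only has to check that the manipulator's influence in $E$ is confined to two situations --- deciding which of two bottom-tied candidates loses in rounds $1,5,9,\ldots,4m$, and creating the late-stage ties involving $d_0$ and $s_2$ --- both of which the tie-breaking rule can replicate. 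What the paper's approach buys is that all of the delicate veto-score bookkeeping is inherited from an already-verified construction; what your approach would buy, if completed, is a self-contained reduction parallel to the Baldwin one, but at the cost of redoing that bookkeeping from scratch.

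That cost is where your proposal has real gaps. First, the entire substance of a Coombs hardness proof lives in the arithmetic you defer: because veto scores only move when a vote's \emph{current} last-place candidate is eliminated, designing pairs of votes whose bottom two positions stay under control across $\Theta(m)$ elimination rounds is considerably harder than the Borda case, where every elimination shifts every score predictably. Declaring that the offsets ``will have to be carried out'' leaves the theorem unproved. Second, your reverse-direction argument is internally inconsistent with your forward direction: in the forward direction you describe an interleaved schedule (eliminate one $a_j$, then the three $v_i\in S_j$ cascade out, then the next $a_j$), but in the reverse direction you assert that any winning order must eliminate all $q/3$ chosen members of $A$ \emph{before} any element of $V$ is removed. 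These cannot both describe the same construction, and the structural claim that forces the chair's choices to encode a cover is precisely the step that needs to be pinned down. Finally, you do not address the distinction between simplified Coombs (run to a single survivor) and unsimplified Coombs (stop at a majority); the paper is careful to note the result holds for both, and a from-scratch construction must verify that no candidate acquires a premature plurality majority mid-cascade.
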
		

\begin{proof}\emph{(Sketch)}
The result holds for both the simplified and
unsimplified Coombs rule. 
Starting from Theorem 3 in Davies et al.~\cite{DaviesNW12}
which shows that the manipulation problem for Coombs is
NP-complete we modify the profile $E$ to show hardness.
%
%
%%We start from a reduction used to show
%%the manipulation problem for Coombs rule is NP-complete.
%Given an instance of X3C and the
%the profile $E$ defined in Theorem 3 from Davies et al.~\cite{DaviesNW12}
%we prove hardness of the control by tie-breaking problem
With a slight modification of the scores,
increasing the initial veto scores of $s_2$ and $d_0$ by 1 each,
we move the burden from the manipulator to the tie-breaking rule.
The profile $E$, generally, creates a voting instance where
a cover is selected and then verified through sequential eliminations
through a complex setting of initial candidate scores (see Table 1 in~\cite{DaviesNW12}).
Here, we need to show that the influence that a single manipulator
has on the outcome of the election can be
again simulated by the tie-breaking rule.

We observe that the manipulator only changes
the outcome of a round in two cases: (1) when two candidates 
are tied to select the loser during the first $4m$ rounds and 
the manipulator can \emph{only} change the outcome of $E$ at rounds $p \in \{1,5,9,\ldots,4m\}$,
where exactly two candidates are tied  \cite{DaviesNW12}.
The first case is when two candidates $a$ and $b$ are tied so that
a static tie-breaking rule should be used  to decide the loser of this round.
The manipulator ranks $a$ (or $b$) at the bottom of his preference
profile and decides which candidate is eliminated at this round regardless
of the tie-breaking rule. 

This case occurs $m$ times during 
the first $4m$ rounds in the proof in~\cite{DaviesNW12}. The manipulator can \emph{only}
change the outcome of $E$ at rounds $p \in \{1,5,9,\ldots,4m\}$, where exactly two candidates
are tied.  In this case, we can simulate the manipulator's influence using the tie-breaking rule.
(2) Where we increase the veto-score of a some candidate $a$ to tie him with a candidate $b$
so that the static tie-breaking rule eliminates $b$ at this round.  This occurs after all
$d_1,\ldots,d_n$ are eliminated (when there is a cover) allowing the elimination of $d_0$ and
after $d_0,d_1 \ldots,d_n$ are eliminated, the score of $c$ is the same as the score of $s_2$.

%a manipulator's vote
%has an influence is  when the manipulator's point is used to 
%increase a veto-score of a some candidate $a$ to tie him with a candidate $b$
%so that the static tie-breaking rule eliminates $b$ at this round.
%This situation occurs twice.  At the third stage
%after $d_1,\ldots,d_n$ are eliminated (in this case there is a cover).
%Namely, the manipulator
%uses 1 veto-point to tie $c$ and $d_0$. This allows $d_0$
%to be eliminated. Similarly, after $d_0,d_1 \ldots,d_n$ are eliminated, the score of $c$ is the same as the score of $s_2$
%due to a veto-point from the manipulator.

Increasing the initial scores of $d_0$ and $s_2$ by $1$
means $d_0$ and $c$ are tied after $d_1,\ldots,d_n$ are eliminated
and the tie-breaking rule can be used to eliminate $d_0$.  Additionally,
$c$ and $s_2$ have the same score and we can eliminate
$s_2$ before $c$ with the tie-breaking rule.
%
%We increased the initial veto-scores of $d_0$ and $s_2$ by $1$.
%Hence, $d_0$ and $c$ are tied after $d_1,\ldots,d_n$ are eliminated
%and the tie-breaking rule can be used to eliminate $d_0$.
%Similarly, the score of $c$ is the same as the score of $s_2$, 
%as we increased the initial veto-score of $s_2$  by 1. Hence,
%by means of the tie-breaking rule,
%$s_2$ can be eliminated before $c$.
%
The elimination order at the $4$th stage is independent of the
manipulator's vote. Hence, $p$ wins
the election if and only if we select a cover during the
first $4m$ rounds by means of breaking ties
appropriately. Hence, control by tie-breaking for Coombs is NP-complete.
%Hence, determining if a preferred candidate is a winner of the election by breaking ties appropriately is NP-complete.
\end{proof}

The construction in the proof of Theorem~\ref{l:l2} can be used to state the following corollary for the (unsimplified) Coombs rule.

\begin{mycorollary}
The control by tie-breaking problem for the
Coombs rule is NP-complete.
\end{mycorollary}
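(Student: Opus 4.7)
The plan is to leverage the construction from Theorem~\ref{l:l2} and show that the same reduction from \textsc{X3C} works once the stopping rule is switched from ``only one candidate remains'' to ``some candidate has plurality score at least $n/2$''. So the task reduces to two checks: that the unsimplified Coombs rule follows the same elimination path on the constructed profile, and that it stops exactly when the simplified rule would hand victory to $p$.

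First I would audit the profile round by round to verify that no candidate accumulates a majority of first-place votes until the very end. The gadget of~\cite{DaviesNW12} was tuned around veto scores, with the relative last-place counts among $p$, $b$, $d_0,\ldots,d_n$, $c$, $s_2$ and the $v_i, a_j$ candidates chosen to force the cover-selection behaviour; the first-place counts are essentially a byproduct. If any candidate happens to cross $n/2$ in plurality too early, I would compensate by padding with a symmetric family of votes whose sole effect is to inflate $n$ relative to any individual candidate's first-place count while contributing equally to every candidate's veto score (for instance, by cycling all candidates through the top and bottom positions an equal number of times). This keeps veto-score differences, and hence the entire elimination sequence, intact.

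Second, with the padding in place, I would re-examine the final stage of the proof of Theorem~\ref{l:l2}. In the ``yes'' direction, after the chair breaks ties to eliminate an exact cover's $a_j$'s and the matching $v_i$'s, the surviving configuration is one in which $p$ dominates the plurality vote; here I would verify from the explicit counts that $p$'s plurality score strictly exceeds $n/2$, so the unsimplified Coombs rule halts and returns $p$. In the ``no'' direction, without an exact cover the chair cannot reach any surviving configuration in which $p$ holds a majority, because $p$ is either eliminated at an earlier round (exactly as in the simplified proof) or remains strictly minoritarian throughout. This gives both directions of the polynomial reduction.

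The main obstacle I anticipate is the bookkeeping around the plurality stopping rule: the gadget was not designed with it in mind, so the substantive work is in choosing padding that does not disturb the delicate veto-score balance while suppressing any premature majority. Once this is settled, the NP-hardness transfer from Theorem~\ref{l:l2} is routine, and NP membership is immediate since a tie-breaking order can be guessed and verified by simulating Coombs in polynomial time.
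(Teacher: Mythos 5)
Your proposal follows essentially the same route as the paper, which justifies the corollary simply by pointing back to the construction of Theorem~\ref{l:l2} (whose proof sketch already asserts that it works for both the simplified and unsimplified rule); your additional check that no candidate reaches a plurality score of $n/2$ before the intended final round is precisely the point that needs verifying. One caution: the padding you suggest via cyclic rotations of a single order is not veto-neutral once candidates are eliminated (e.g.\ with $(a,b,c)$ and its rotations, deleting $c$ leaves $b$ with two last places and $a$ with one), so if padding were actually required you would need vote pairs in the style of the paper's $W_{(u,v)}$ gadgets rather than simple rotations.
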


\subsection{Cup and Copeland}

Cup and Copeland are often used in real life settings involving sports or other 
competitions where ties must be resolved on the fly.  
Under Cup all ties must be resolved
before the next round can be computed.  In Copeland, when we only select
a winner from the set of elements tied with highest Copeland score, we fall
under the result of Theorem~\ref{thm:easy}.  However, when Copeland is used
in a sports competition, often pairwise ties between candidates need to be 
resolved before the final Copeland score can be computed (i.e., NCAA football).

%Cup and Copeland often need to have ties resolved d
%tournament\footnote{We use tournament here in it's common language meaning as a sports or 
%matchup competition which uses a series of pairwise comparisons to determine a winner;
%not in its strict mathematical sense as a complete directed graph.} based rules like Cup, Copeland, and ranked pairs may require multiple ties to be resolved.
%For our use of Copeland in this section we are drawing from
%examples like round robin sports competitions where
%individual matches cannot end in ties.  Therefore, the ties of these
%matches must be resolved before declaring the winner.

%Additionally, we must consider two different forms of tie-breaking: \textit{a priori}, where
%the chair submits a tie-breaking order that is used during the computation of the rule
%and \textit{a posteriori} where the chair selects from among the set of co-winners.
%This distinction can have a major affect on the complexity of computation as Brill and Brandt \cite{BrillF12}
%showed that the ranked pairs method is computable in polynomial time for the a priori method
%while the a posteriori method is intractable.

%From \cite{conitzer2007elections} and \cite{russell-maniptournaments} we know that the manipulation problem for Cup
%can be computed in polynomial time.
%Observe that if the manipulator has access to only the tie
%breaking order we will only ever have to place $p$ ahead of all other candidates.

To determine the best tie-breaking order for Cup
we can use the algorithm from Theorem 2
in \cite{conitzer2007elections} that computes
a manipulating vote %to ensure a given candidate wins,
and use the returned
manipulation as the linear order for tie-breaking.

\begin{mytheorem}
The control by tie-breaking problem for the Cup rule is polynomial when each candidate
appears only once in $S$.
\end{mytheorem}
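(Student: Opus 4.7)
The plan is to adapt the polynomial-time dynamic programming algorithm of Conitzer, Sandholm and Lang (Theorem~2 of~\cite{conitzer2007elections}), originally used to decide whether a single manipulator can make $p$ win a Cup election, replacing the effect of the manipulator's extra vote with the chair's freedom to resolve tied pairwise matchups. I would walk the schedule tree $S$ bottom-up and store, at each node $v$, the set $W(v) \subseteq C$ of candidates that could emerge from the subtournament rooted at $v$ under some assignment of tie-breaking outcomes within that subtree.

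First I would set $W(v) = \{c\}$ at each leaf labelled by candidate $c$. At an internal node $v$ with children $\ell$ and $r$, place a candidate $c$ in $W(v)$ exactly when there exist $x \in W(\ell)$ and $y \in W(r)$ with $c \in \{x,y\}$ and $N(c, c') \geq N(c', c)$, where $c'$ is the opponent in $\{x,y\}\setminus\{c\}$; in other words, $c$ either strictly wins its head-to-head or is tied and can be advanced by the tie-breaker. The chair can control the election if and only if $p \in W(\mathrm{root})$, in which case a witness tie-breaking rule $T$ is recovered by backtracking through the DP, recording for each tied matchup on the realised winning path which of the two candidates is chosen, and extending $T$ arbitrarily on all other subsets. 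Since $S$ has $O(m)$ nodes and $|W(v)| \leq m$ at every node, the whole procedure runs in time polynomial in $m$ and $n$.

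The step whose correctness relies most delicately on the hypothesis that each candidate appears only once in $S$, and which I expect to be the main obstacle, is the guarantee that the per-node tie-breaking decisions can be glued into a single consistent choice function $T$. Under that hypothesis any unordered pair of candidates is compared at most once in the whole tournament, so the tie-breaking decisions at different nodes concern pairwise disjoint pairs and can be chosen independently; consequently $W(v)$ depends only on $W(\ell)$ and $W(r)$, the DP is sound, and the extracted $T$ is well defined. Without the hypothesis the same pair could recur at several nodes and the same tie would have to be broken the same way each time, turning the simple bottom-up recursion into a consistency-constrained search that is no longer obviously polynomial.
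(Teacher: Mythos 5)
Your proposal is correct and is essentially the paper's own argument: the paper simply invokes the bottom-up subtree-winner algorithm of Conitzer, Sandholm and Lang (Theorem~2 of \cite{conitzer2007elections}) and observes that, since each candidate appears only once in $S$, the resulting tie-breaking decisions form a consistent (indeed linear) order. You spell out that dynamic program explicitly and justify the consistency of the per-node choices via the each-pair-meets-at-most-once observation, which is exactly the point the paper makes informally after the theorem statement.
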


Notice that the above procedure always returns a linear
tie-breaking order. When each candidate
appears only once, a manipulator cannot
benefit by breaking ties with an
order that violates transitivity.  To profit
from a non-transitive order,
we would need multiple pairwise comparisons between candidates.
For example, in double elimination tournaments such as the Australian Rules 
Football League Finals Series,
candidates appear twice.
We only need a linear schedule and one candidate to appear twice to see the difference.
Consider the tournament illustrated in Figure~\ref{fig:tie}; in order to select
between $a$, $b$, and $c$ to ensure a win for $p$ we must choose a 
non-linear order where $c>b$, $b>a$ and $a>c$.

%\begin{myexample}\label{ex:ties}
%Consider the majority relation shown in Figure 1 which we can realise through
%votes according to McGarvey's Theorem \cite{mcgarvey53}.  All un-drawn arrows
%go from left to right.  The relation between a, b, and c is unspecified (tied).
%For the given agenda we must select the pairwise relations between $a$, $b$, and $c$.
%Notice that the only way to ensure that $p$ wins the tournament is to have
%$c>b$, $b>a$ and $a>c$.
%\end{myexample}

\begin{figure}
\centering
\includegraphics[scale=0.2]{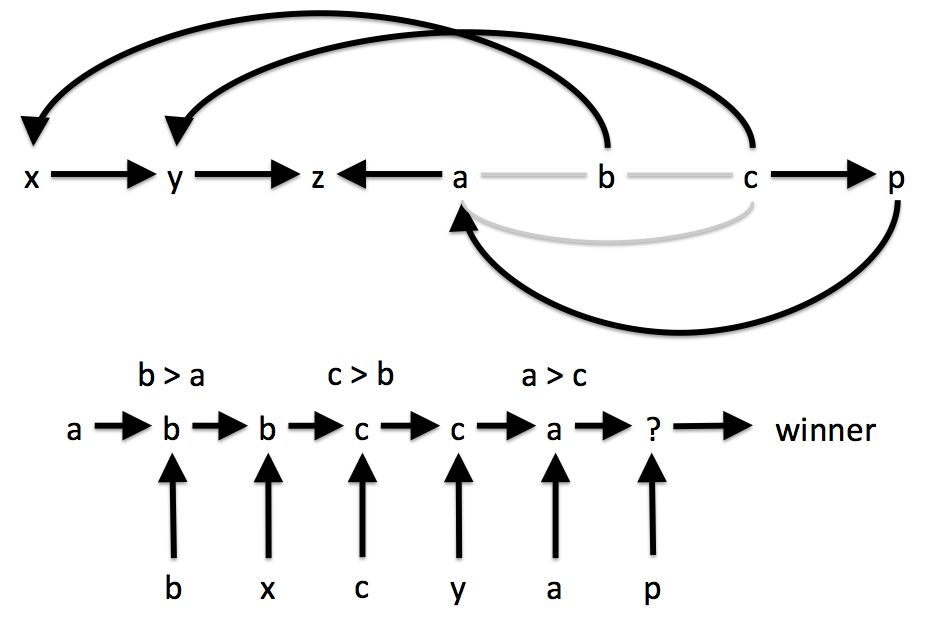}

\caption{Pairwise relation and Cup graph illustrating 
that it is possible to increase the chances for control if we allow the chair to specify a 
non-transitive tie-breaking order.}
\label{fig:tie}
\end{figure}

If we allow candidates to enter the tournament more than once, and if the tournament
can have arbitrary shape, the control by tie-breaking problem becomes hard.

\begin{mytheorem}\label{cup:3sat}
When the Cup schedule $S$ can have arbitrary shape and candidates can appear
more than once, control by tie-breaking is NP-complete.
\end{mytheorem}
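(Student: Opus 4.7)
Membership in NP is immediate: nondeterministically guess, for every pair of candidates that turns out to be pairwise tied, which one the tie-breaking rule $T$ selects, then simulate the cup in polynomial time.

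For hardness I would reduce from 3SAT. Given a formula $\varphi$ with variables $x_1,\ldots,x_n$ and clauses $C_1,\ldots,C_m$, introduce two candidates $u_i$ and $\bar{u}_i$ for each variable and build the profile so that each such pair is exactly tied in the pairwise majority relation, with all other relevant pairwise margins strict and in whatever direction the construction demands. Because $T(P,W)$ depends only on the set $W$, every matchup between $u_i$ and $\bar{u}_i$ --- no matter how many times it appears in the schedule --- must be resolved the same way. The tie-breaking rule therefore encodes a truth assignment: set $x_i = \text{TRUE}$ iff $T(P,\{u_i,\bar{u}_i\}) = u_i$.

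I would then design the schedule $S$ so that the preferred candidate $p$ must win a chain of $m$ clause subtrees, one per clause, to take the cup. The subtree for clause $C_j$ contains copies of the six literal candidates of the three variables in $C_j$, arranged so that the candidate who eventually meets $p$ at the top of the subtree is one that $p$ can beat if at least one literal of $C_j$ is set truthfully by its variable's tie-breaking decision, and one that defeats $p$ otherwise. The implicit AND along the chain of subtrees then corresponds to the AND across clauses. Because a variable can occur in several clauses, the same tied pair $\{u_i,\bar{u}_i\}$ will reappear across subtrees, and the forced consistency of $T$ on that set is precisely what promotes local clause-satisfying choices into one globally consistent assignment. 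Hence $p$ wins the cup iff $\varphi$ is satisfiable, and the figure in the excerpt illustrates in miniature why the ability to repeat candidates in $S$ is essential: it is exactly this repetition that lets the construction bind the same tie-breaking decision to multiple points of the tournament.

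The main obstacle is engineering the profile and the schedule together. On the profile side, each $\{u_i,\bar{u}_i\}$ must be a genuine pairwise tie while every other pairwise margin, especially those involving $p$ and any candidate it may meet, is strict and in the intended direction; standard McGarvey-style pairs of reversed votes together with a small corrective block for the tied pairs should suffice, but care is needed because any unintended tie hands the chair extra, uncontrolled tie-breaking freedom and can collapse the reduction. On the schedule side, the clause subtree must realise a genuine three-way disjunction over the variables' tie-break choices, with no collateral dependence on other variables and no accidental reuse of a variable across clauses that would turn a local decision into an unintended global constraint. Fitting these two designs together while keeping the number of candidates, voters, and matchups polynomial in $n+m$ is where the bulk of the technical work lives.
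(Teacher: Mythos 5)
Your reduction is essentially the paper's own: tied literal pairs whose tie-break encodes a truth assignment (consistent across all reappearances because $T$ depends only on the set), per-clause sub-cups in which the clause candidate survives to meet and beat $p$ unless some satisfying literal is selected, and a sequential chain of these sub-cups implementing the conjunction over clauses. The details you defer (the exact clause gadget --- the clause candidate sequentially plays the winners of the three literal-vs-negation matches and is beaten only by its satisfying literals --- and realising the pairwise relation via McGarvey) are exactly what the paper fills in, so the approach is the same.
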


\begin{proof} %(Sketch)
%Reduction from \textsc{3SAT}.
We reduce from an instance of \textsc{3SAT} where we are given
a set of clauses $K = \{ k_1, \dots, k_n\}$ and a set of literals with their negations
$L = \{l_1, \overline{l_1}, \dots, l_m, \overline{l_m}\}$ and asked to find an assignment
to every literal, either $l_1$ or $\overline{l_1}$ such that every clause in $K$ is satisfied.

Given an instance of \textsc{3SAT}, we create a Cup tournament with candidates
$C = K \cup L \cup \{p\}$.  The pairwise relation between the candidates in the cup
has $p$ defeating all elements
of $L$; each literal and its negation are tied and each literal and its negation defeat any literal with a higher number
($l_1 \sim  \overline{l_1} > l_2 \sim \overline{l_2}$); each clause is defeated by only those literals
that would satisfy it while defeating all other literals and $p$.  We construct $S$, the cup tournament,
 as follows: for each $k_i$ we pair the three literals that would satisfy $k_i$ with their negations in a sub-cup
 where each literal matches against its negation in the first round.  In the second to fourth round of the sub-cup
 $k_i$ plays the winner of each of these three literal vs. literal match-ups, sequentially.  We then compose each
 of these sub-cups, for each $k_i$ such that $p$ faces the winner of each of the sub-cups
 sequentially.  Thus every literal plays the clause that contains it and every clause plays $p$.

The manipulator must select a tie-breaking order for each pair of literals and their negations.
If there is a tie-breaking order which selects either $l_1$ or $\overline{l_1}$ for every literal
such that all clauses are satisfied, then $p$ will win the Cup.  Each $k_i$ will face
each literal or negation in $k_i$, depending on the tie-breaking rule.  Since only literals that
satisfy $k_i$ defeat it, one must be selected, otherwise $p$ will lose to $k_i$ when $p$
plays the winner of each of the sub-tournaments.  If $k_i$ is satisfied by one (or more)
of its literals, then $k_i$ will be eliminated and a literal will face (and lose to) $p$ in the latter part of the tournament.
Hence $p$ will win the tournament if and only if there is tie-breaking rule that satisfies the
\textsc{3SAT} instance.
\end{proof}

For the Copeland rule,
we know that the second-order tie-breaking
rule is NP-hard to manipulate \cite{bartholditoveytrick}.
We can also devise other tie-breaking rules to add
to Copeland to make manipulation NP-hard \cite{oehaamas11}.
%For a posteriori tie-breaking, can simply select $p$ from the co-winner set, if he
%appears.  However, for a priori tie-breaking
%we need to break ties during the course of the tournament.  We will always do this
%in favour of the preferred candidate,
%and
On the other hand, the regular Copeland rule is vulnerable
to control by breaking ties.  Here we consider the variant of Copeland often
used in conjunction with sports tournaments
where we must resolve ties in the pairwise graph \emph{before} we resolve
any ties related to the total Copeland score (e.g., Olympic Ice Hockey).  
This problem is closely
related to the possible winner problem when there are partially specified
preference profiles \cite{xia2011determining}.
%
%\begin{mytheorem}
%\label{thm:copeland}
%The control by tie-breaking problem for Copeland.
%\end{mytheorem}
%
If we require that the tie-breaking rule be transitive then
we can use the algorithm from Bartholdi, Tovey and Trick
\cite{bartholditoveytrick} which provides a score minimizing
the maximum score of all other candidates to set the tie-break
order.

%for any tie involving the preferred candidate $p$, we always
%break the tie in favor of $p$ as this will make his score as
%high as possible.  For a tie involving any other set of candidates
%we can use the algorithm from Bartholdi, Tovey and Trick
%\cite{bartholditoveytrick} which provides a score minimizing
%the maximum score of all other candidates.  The algorithm works
%by successively selecting a winner based on who has the lowest
%guaranteed future score ensuring the highest possible
%score for $p$ and the lowest for all other candidates.

Allowing the tie-breaking rule to be non-transitive increases the potential for
control of the tie-breaking rule under Copeland. Consider the
election with the following 6 votes:
$(d,f,g,p,a,b,c),$
$(d,f,g,p,c,b,a),$
$ 2 \times (p,a,b,c,d,f,g),$
and $ 2 \times (c,b,a,d,f,g,p)$. Suppose
we want $p$ to win.  The Copeland
score of the candidates (denoted s(c)) are $s(p)=s(a)=s(b)=s(c)=s(d)=s(3), s(f)=2, s(g)=0$.  We need to submit a tie-breaking
order that will resolve the pairwise ties between $a,b$ and $c$.  There is no transitive order that we can submit
to resolve these such that $p$ wins.  However, we can submit pairwise preferences that will maintain
the cycle and allow $p$ to win with an additional point over $g$.

We note that such tie-breaking is closely to the problem
of manipulating a Copeland election with \textit{irrational} voters
\cite{faliszewski2009llull} which
is polynomial time computable \cite{faliszewski2010manipulation}.  If we allow the
chair to specify the result of each pair-wise
tie separately (and thus to break ties
non-transitively), Copeland remains vulnerable
to control by breaking ties.  This problem 
is closely related to the problem of finding possible 
winners in a tournament that is partially specified \cite{aziz2012possible}.
%We break any tie involving the preferred candidate
%in favor of $p$, thus ensuring $p$ the maximum score.
We can use the algorithm presented in Theorem 5.2 from Faliszewski et al.~\cite{faliszewski2010manipulation}
which allows us to find an assignment to the pairwise relationship
between all the non-$p$ candidates to minimize their scores.

\begin{mycorollary}\label{cor:copeland}
The control by tie-breaking
problem for Copeland is polynomial,
even when tie-breaking is specified in terms
of a non-transitive ordering on pairwise
contests.
\end{mycorollary}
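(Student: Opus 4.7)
The plan is to reduce control by tie-breaking for Copeland to a two-stage optimization over the set of tied pairwise contests, and then invoke the polynomial algorithm of Faliszewski et al.~\cite{faliszewski2010manipulation} for the second stage. First I would compute the pairwise majority graph and partition the tied pairs into those involving $p$ and those not involving $p$. Because the tie-breaking function is allowed to be non-transitive, the chair can orient each tied pair independently, so no consistency constraint needs to be enforced.

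Next, I would show that every pairwise tie involving $p$ should be oriented toward $p$. This is a monotonicity argument: orienting a tie $(p,c)$ in $p$'s favor raises $p$'s Copeland score by one and lowers $c$'s score by one compared with orienting it toward $c$, and never affects the contests among the remaining candidates. Hence fixing all such ties in favor of $p$ is without loss of generality and yields a fixed target score $s(p)$ equal to $p$'s wins plus its ties in the majority graph.

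With $s(p)$ fixed, the remaining task is to orient the ties among non-$p$ candidates so that every other candidate ends up with Copeland score strictly below $s(p)$. This is exactly the problem of assigning pairwise outcomes among non-$p$ candidates to minimize the maximum of their Copeland scores, which is the combinatorial core of Copeland manipulation with \emph{irrational} voters. I would appeal to Theorem 5.2 of Faliszewski et al.~\cite{faliszewski2010manipulation}, whose polynomial-time procedure produces an assignment minimizing the maximum non-$p$ score; $p$ can be made unique winner if and only if this minimum is strictly less than $s(p)$.

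The main obstacle, and the only nontrivial step, is justifying the reduction to the irrational-voters setting: one must check that the flexibility granted by a non-transitive tie-breaking function on pairwise contests matches the degrees of freedom used by the Faliszewski et al.\ algorithm, and that no global consistency of the tie-breaking relation is required. Once this correspondence is established, polynomiality follows immediately, and the final comparison $s(p) > \max_{c \neq p} s(c)$ is constant-time per candidate. Note that the argument goes through uniformly for every $\alpha \in [0,1]$, since orienting a tie in favor of one candidate versus the other changes the two candidates' scores symmetrically by $1 - \alpha$ and $-\alpha$ respectively, preserving the monotonicity used in the first step.
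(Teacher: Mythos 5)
Your proposal is correct and follows essentially the same route as the paper: fix all ties involving $p$ in $p$'s favor by monotonicity, then invoke the polynomial-time algorithm of Theorem 5.2 in Faliszewski et al.~\cite{faliszewski2010manipulation} for Copeland with irrational voters to orient the remaining ties among non-$p$ candidates so as to minimize their maximum score, and compare against $s(p)$. The paper states this only in passing before the corollary; your write-up merely makes the decomposition and the monotonicity step explicit.
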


% following tournament which we can construct according to McGarvey's
%Theorem \cite{mcgarvey1953theorem} (assume all directed defeats are by a even number of votes):

%\begin{center}
%\includegraphics[height=2.5cm]{example.png}
%\end{center}

%There is no transitive tie-breaking order that will allow $p$ to win.  Any linear order that breaks the tie between
%$a$, $b$, and $c$ will award $2$ points to one of them, giving that alternative a score greater than $p$.  However,
%if we were allowed to use the non-transitive tie-breaking rule: $a>b, b>c, c>a$, then $p$ would win.
%\end{proof}

%\section{Combining Voting Rules}
\section{COMBINING VOTING RULES}

We have seen many rules are vulnerable to control by breaking ties 
when there is only one opportunity to break ties.  Conversely, we 
have seen that some rules
are resistant when many tie-breaks are required.
This leaves open the question of what happens
when only a small, fixed number of tie-breaks are required.
Interestingly, we show that in these cases, rules can be 
resistant to control by tie-breaking.

%We have shown that when we break ties only
%once to decide between the co-winners,
%voting rules are vulnerable to control by breaking ties.
%We have also shown that when we break
%ties multiple times, some multi-round voting
%rules are resistant to control by breaking ties.
%This leaves open the question of
%whether we ever have resistance to control by breaking
%ties when the chair only has a small, but fixed number
%of opportunities to break ties. We
%consider here two stage rules where the chair only
%has at most two opportunities to break ties.
%
%Control by breaking ties
%can be intractable in this case.

\begin{mytheorem}
There exists a two stage voting rule based on veto and plurality
where the control
by tie-breaking problem is NP-complete.
\label{thm:vetoplurality}
\end{mytheorem}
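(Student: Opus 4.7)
\emph{(Plan.)}
Membership in NP is immediate: a candidate witness is the chair's tie-breaking order for the veto stage together with the tie-breaking order used in the plurality stage, and simulating the two rounds to check that $p$ emerges is polynomial. For hardness the plan is to reduce from 3-SAT. Given a formula with variables $x_1,\ldots,x_n$ and clauses $K_1,\ldots,K_r$, I would introduce the preferred candidate $p$, a pair $\{x_i^+,x_i^-\}$ per variable, a clause candidate $c_j$ per clause, and enough dummies to make $|C|$ even.

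The profile is built so that in the veto round the candidates split into three veto-score strata: $p$ and every $c_j$ sit strictly at the top (so they are guaranteed to survive), the $2n$ literal candidates are all tied exactly at the survival/elimination boundary, and the dummies sit strictly below (so they are guaranteed to be eliminated). Since the rule keeps exactly $m/2$ candidates, the chair's only real decision in the first stage is to break the literal tie by choosing which $n$ literals to keep; this choice is then interpreted as a variable assignment. For each clause $K_j = \ell_a \vee \ell_b \vee \ell_c$ I would then add a block of $M$ plurality-shaping voters with preference $\ell_a \succ \ell_b \succ \ell_c \succ c_j \succ \cdots$, so after the veto stage each such vote is absorbed by the first surviving clause-literal (if one exists) and otherwise lands on $c_j$. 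Further voters give $p$ a carefully chosen base plurality score. The intended equivalence is: $p$ can be made to win iff every clause has a surviving literal, i.e., the chosen assignment satisfies the formula.

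The hard part is the inequality bookkeeping. A kept literal $\ell$ can accumulate votes from every clause in which it is the first surviving literal, so its plurality total may rival the $M$ votes that a single unsatisfied $c_j$ would collect. To make $p$'s base score beat every possible surviving-literal total while still losing to any clause candidate of an unsatisfied clause, I would have to rotate which of $\{\ell_a,\ell_b,\ell_c\}$ is listed first across sub-blocks of the $M$ voters (spreading the within-clause mass across the clause's literals) and add padding voters that give $p$ score proportional to how many literals survive, so that $p$'s score tracks ``number of kept literals'' while no individual literal's score does. I would also have to rule out irregular chair strategies that keep both or neither of some $\{x_i^+,x_i^-\}$; the standard fix is a per-pair gadget that funnels extra mass to a clause-style threat whenever a pair is unbalanced, so every chair strategy under which $p$ wins can be converted into a legitimate satisfying assignment, and conversely a satisfying assignment gives the chair a winning pair of tie-breaking orders (resolving veto ties to select the true literals and resolving any residual plurality tie in favour of $p$). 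Calibrating $M$, the base count, and the per-pair penalties so all the inequalities fire simultaneously is the main technical obstacle.
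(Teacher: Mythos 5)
Your high-level architecture is the right one and matches the paper's: engineer the veto round so that the chair's tie-breaking amounts to freely choosing which half of the candidates survive, and then encode an NP-hard selection problem in the plurality scores of the survivors. The paper gets this for free by making \emph{all} candidates except a single all-vetoed dummy tied on veto score and then reusing, essentially verbatim, the Bartholdi--Tovey--Trick X3C reduction showing that control by deleting candidates is NP-hard for plurality; all of the score bookkeeping is inherited from that known construction. You instead attempt a fresh 3-SAT reduction, and that is where the proposal stops short of a proof.

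The concrete gaps are in exactly the place you flag. First, rotating which of $\ell_a,\ell_b,\ell_c$ heads the sub-blocks of the $M$ clause voters does not solve the accumulation problem: if $\ell$ is the \emph{unique} surviving literal of a clause, it is the first survivor in every sub-block of that clause and still absorbs all $M$ votes, so a literal occurring in $k$ clauses can reach $kM$; since plain 3-SAT does not bound occurrences, no fixed base score for $p$ can dominate this while still losing to an unsatisfied clause candidate. You would need a bounded-occurrence variant of 3-SAT (or X3C with bounded occurrence, as in the constructions the paper builds on) plus clause-candidate base scores calibrated against the maximum occurrence count. Second, the proposed padding ``proportional to how many literals survive'' is vacuous: the cardinality of the veto cut together with the forced survival of $p$ and the clause candidates and the forced elimination of the dummies fixes the number of surviving literals at exactly $n$, so this padding is a constant and does nothing to separate $p$ from an over-fed literal. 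Third, the both/neither issue for a pair $\{x_i^+,x_i^-\}$ is only half-addressed; the clean fix is to add a pair-clause candidate for $(x_i^+\vee x_i^-)$, which handles ``neither,'' and then observe that ``both survive'' for some pair forces ``neither survives'' for another by counting, so the ``both'' case needs no separate gadget --- but this argument does not appear in your sketch. Since the reduction's correctness rests entirely on these inequalities firing simultaneously and you explicitly leave their calibration open, the proposal as written establishes NP membership and a plausible plan, but not NP-hardness.
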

\begin{proof}
We consider the
rule that first eliminates
half the candidates using the
veto rule, then elects the
plurality winner.
Clearly, the control problem is in NP.
We select the subset of candidates through tie-breaking
for the runoff. To show NP-hardness,
we adapt the reduction from \textsc{X3C} used in the proof of
Theorem 3 in \cite{bttmcm92} that demonstrates
control by elimination of candidates
for plurality is NP-hard.

Given two sets $V = \{v_1, \dots, v_m\}$ and $S = \{S_1, \dots, S_n\}$,
this reduction uses $n+4m/3+2$ candidates
where $m$ is the size of the set being
covered, $n$ is the number of 3-element
sets from which the cover is built:
$p$ is the preferred  candidate,
$w$ is the current winner,
$s_i$, $i = 1,\ldots,n$ are candidates that correspond
to sets $S_i$, $v_j$, $j=1,\ldots,m$ represent the elements of $V$;
$f_k$, $k = 1,\ldots, m/3$ are additional candidates.
We double the number of candidates to $2n+8m/3+4$
with $n+4m/3+2$ additional dummy candidates $d_i$ that
occur in the same fixed order in every vote.
The first $n+m+1$ candidates appear at the front
of the votes, whilst the last $m/3+1$ appear
at the end. The unlisted candidates are ranked in an arbitrary order between 
the candidates $f_{m/3}$ and $d_{m+n+2}$.

%So, we have:
%for $i=1,\ldots,n$: 1 vote $(d_1 \succ \ldots \succ d_{n+m+1} \succ s_i \succ c \succ f_1 \succ \ldots \succ f_{m/3} \succ \ldots  \succ d_{n+m+2} \succ \ldots \succ d_{n+4m/3+2})$; 
%for $i=1,\ldots,n$ and $v_i^j \in S_i, j=1,2,3$: 1 vote $(d_1 \succ \ldots \succ d_{n+m+1} \succ s_i \succ  v_i^j \succ f_1 \succ \ldots \succ f_{m/3} \succ \ldots  \succ d_{n+m+2} \succ \ldots \succ d_{n+4m/3+2})$;
%$m/3 - 1$  votes $(d_1 \succ \ldots \succ d_{n+m+1} \succ w \succ f_1 \succ \ldots \succ f_{m/3} \succ \ldots  \succ d_{n+m+2} \succ \ldots \succ d_{n+4m/3+2})$; and 
%for  $j=1,\ldots,m$: $m/3 - 2$ votes  $(d_1 \succ \ldots \succ d_{n+m+1} \succ v_j \succ f_1 \succ \ldots \succ f_{m/3} \succ \ldots  \succ d_{n+m+2} \succ \ldots \succ d_{n+4m/3+2})$.

\begin{itemize}
  \item for $i=1,\ldots,n$: 1 vote $(d_1 \succ \ldots \succ d_{n+m+1} \succ s_i \succ c \succ f_1 \succ \ldots \succ f_{m/3} \succ \ldots  \succ d_{n+m+2} \succ \ldots \succ d_{n+4m/3+2})$
  \item for $i=1,\ldots,n$ and $v_i^j \in S_i, j=1,2,3$: 1 vote $(d_1 \succ \ldots \succ d_{n+m+1} \succ s_i \succ  v_i^j \succ f_1 \succ \ldots \succ f_{m/3} \succ \ldots  \succ d_{n+m+2} \succ \ldots \succ d_{n+4m/3+2})$.
\item $m/3 - 1$  votes $(d_1 \succ \ldots \succ d_{n+m+1} \succ w \succ f_1 \succ \ldots \succ f_{m/3} \succ \ldots  \succ d_{n+m+2} \succ \ldots \succ d_{n+4m/3+2})$
\item for  $j=1,\ldots,m$: $m/3 - 2$ votes  $(d_1 \succ \ldots \succ d_{n+m+1} \succ v_j \succ f_1 \succ \ldots \succ f_{m/3} \succ \ldots  \succ d_{n+m+2} \succ \ldots \succ d_{n+4m/3+2})$.
\end{itemize}

%In \cite{bttmcm92}, the \textsc{X3C} problem has a solution
%if and only if we can remove $m/3$ candidates
%from the original votes to ensure the distinguished
%candidate is the plurality winner. 

With our two stage rule, one
of the dummy candidates, $d_{n+4m/3+2}$, has all the vetoes so will
be eliminated. The chair therefore has to tie-break and select, sequentially, $n+4m/3+2$ of the 
remaining $2n+8m/3+3$ candidates.  We start with $W = C \setminus \{d_{n+4m/3+2}\}$
and apply the tie-breaking rule sequentially $m/2$ times. At each step we select some candidate,
$T(W,P) = c_i$, then re-apply
the tie-breaking rule on the set $T(W\setminus\{c_i\},P)$, continuing until we select a set of
the correct size for the second round.  
To ensure that the distinguished
candidate is the plurality winner, the chair's tie-breaking must
eliminate all $n+m+1$ dummy candidates at the front
of the vote, plus $m/3$ of the candidates
from the original election corresponding to the
cover. Hence, the \textsc{X3C} problem has a solution if and only
if the chair can tie-break to
ensure the distinguished candidate wins.
\end{proof}

Conitzer and Sandholm \cite{csijcai03} give a general construction
that builds a two-stage voting rule that often makes
it intractable to compute a manipulating vote.
This construction runs one round of the Cup rule,
eliminating half of the candidates, and then
applies the original base rule to
the candidates that remain.
For the base rule $X$, we denote this
as $Cup_1+X$. The control by tie-breaking problem is also
typically intractable for
such two-stage voting rules.  Adapting Theorem 2 in Conitzer and Sandholm \cite{csijcai03}
we can make the following statement.

\begin{mytheorem}
The control
by tie-breaking problem for
$Cup_1+Plurality$,
$Cup_1+Borda$, and
$Cup_1+Maximin$
are NP-complete.
\end{mytheorem}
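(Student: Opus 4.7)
The plan is to adapt the Conitzer--Sandholm construction from Theorem~2 of \cite{csijcai03}, which shows manipulation is NP-hard for $Cup_1+X$ when $X$ is Plurality, Borda, or Maximin, by transferring the combinatorial burden from a manipulating vote to a tie-breaking rule. Membership in NP is immediate: the certificate is the tie-breaking rule together with the sequence of co-winner sets it must resolve (at most $O(m)$ of them, each of size at most $m$), and we can verify in polynomial time that this choice makes $p$ the unique winner.

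For hardness, I would start from the same NP-hard source problem used by Conitzer and Sandholm and reuse their candidate set and vote profile almost verbatim. The key structural fact their profile provides is that after the first round of Cup, the pairwise contest graph can be set up so that each of the $m/2$ first-round matches is a tie among the genuine candidates (in contrast to their setting, where the manipulator's single additional vote tips these matches). With every first-round pairing tied, the tie-breaking rule $T$ is forced to be invoked $m/2$ times, once per pair, and its sequence of choices determines exactly which half of the candidates advance to the base rule $X$. This mirrors the role of the manipulator's vote in \cite{csijcai03}: the subset of advancing candidates encodes the choice (assignment, cover, etc.) in the source problem, and the correctness argument for $X\in\{\text{Plurality}, \text{Borda}, \text{Maximin}\}$ carries over since $X$ is then applied to a profile identical to the one analysed in the original reduction. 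In the second stage, the base rule can be arranged so that $p$ is the unique $X$-winner on the surviving half iff the encoded solution is valid; where $X$ itself produces co-winners, the preferred candidate $p$ can be designed to strictly dominate, so no further tie-breaking is needed.

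The main obstacle is enforcing that all first-round Cup contests are genuine ties and that no secondary ties arise that could allow the tie-breaking rule to ``rescue'' a bad selection in the second stage. Concretely, I would pad each candidate's pairwise record against every other candidate with symmetric blocks of votes of the form used throughout the paper (e.g.\ $W_{(u,v)}$-style pairs) so that $N(i,j)=N(j,i)$ for every matched pair in Cup's first round, while simultaneously giving $p$ a scoring advantage in the second stage that is large enough to break any would-be tie under $X$ provided the ``right'' candidates advance. Handling Borda and Maximin requires a bit more care than Plurality because both are sensitive to the full preference orderings of the surviving candidates; here I would follow the same padding trick used by Conitzer and Sandholm to ensure that only the advancing subset's identities, and not subtle interactions with eliminated candidates, determine the second-stage outcome.

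Finally, since the construction produces an instance where tie-breaking occurs exactly $m/2$ times in stage one and at most once (trivially) in stage two, this both establishes NP-completeness for $Cup_1+\text{Plurality}$, $Cup_1+\text{Borda}$, and $Cup_1+\text{Maximin}$ and reinforces the paper's broader message: even two-stage rules in which the chair breaks only a bounded number of ties per stage can be resistant to control by tie-breaking.
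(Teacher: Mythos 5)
Your proposal is correct and follows essentially the same route as the paper: both take the Conitzer--Sandholm Theorem~2 reduction for manipulating $Cup_1+X$, observe that without the single manipulating vote each literal/negation pair in the first Cup round is exactly tied, and let the chair's tie-breaking choices play the role of the manipulator's vote, so that $p$ wins iff the SAT instance is satisfiable. The paper's proof is in fact terser than yours --- it simply asserts the ties are already present in the original profile rather than constructing padding to force them --- but the underlying argument is the same.
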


\begin{proof}
Consider the reduction from
SAT used in Theorem 2 in Conitzer and Sandholm \cite{csijcai03}
showing that it is NP-hard to
construct a single vote to ensure
a distinguished candidate wins $Cup_1+Plurality$.
This reduction sets up a profile
in which the candidates $c_{+v}$ and $c_{-v}$ corresponding
to a literal and its negation which are paired in the first
round of a Cup and are tied.  The construction of the Cup
is similar to the one described in Theorem~\ref{cup:3sat}.
There is a vote that breaks these
ties so that the distinguished candidate wins
if and only if the SAT instance is satisfiable.
Let us consider just the original profile, without
the single manipulating vote. Now, the chair can break these
ties so that the distinguished candidate wins
if and only the SAT instance is satisfiable. The other
proofs are similar and are adapted
from reductions in
\cite{csijcai03}.
% showing that constructive
%manipulation is NP-hard.
\end{proof}

Elkind and Lipmaa
\cite{elisaac05} generalize this
construction to run a number of rounds, $k$,
of some rule before
calling a second rule; making computing a manipulating vote
NP-hard in many cases. Control by tie-breaking for such hybrids
is often NP-hard as tie-breaking can simulate the
manipulating vote used in the proofs in Elkind and Lipmaa \cite{elisaac05}.
For example, control by tie-breaking
for $\Hyb(\STV_k,Y)$ and $\Hyb(Y,\STV_k)$ is
NP-hard, where
$Y$ is one of: plurality, Borda, maximin or Cup.
%
%Interestingly, there are
%hybrid rules where the control by tie-breaking
%problem has a different complexity to
%the manipulation problem. 
%
Interestingly,
$\Hyb(plurality_k, plurality)$ is vulnerable
to manipulation~\cite{elisaac05} and manipulation is polynomial 
if $k$ is bounded; this result carries to our problem.
However, this hybrid is resistant to control by tie-breaking
for unbounded $k$.

\begin{mytheorem}
The control by tie-breaking problem for \\ $\Hyb(Plurality_k, Plurality)$
is polynomial if $k$ or $m-k$ is bounded.
\end{mytheorem}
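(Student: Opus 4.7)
We would treat the two cases separately, in each case reducing the problem to enumerating a polynomial-size family of candidate executions and checking each.

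For the case $k$ bounded, we adapt the enumeration idea used for the manipulation analogue in \cite{elisaac05}. The chair makes at most one tie-breaking choice per elimination round, and at each round the choice is among the candidates currently tied for the minimum plurality score, so there are at most $m^k$ possible elimination sequences; for constant $k$ this is polynomial in $m$. We iterate over all such sequences $e_1,\ldots,e_k$, verify that each $e_i$ actually attained the minimum plurality score in $C\setminus\{e_1,\ldots,e_{i-1}\}$, and then compute the plurality scores in the surviving set $C\setminus\{e_1,\ldots,e_k\}$. Because the final plurality tie is also resolved by the chair, the answer is yes iff some valid sequence leaves $p$ among the plurality co-winners of the surviving set.

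For $m-k$ bounded we dualise the enumeration: enumerate the surviving set rather than the elimination order. Letting $c=m-k$, there are only $\binom{m-1}{c-1}=O(m^{c-1})$ candidate surviving sets $S\ni p$, which is polynomial in $m$ for constant $c$. For each such $S$, the plurality scores within $S$ depend only on $S$ itself --- each voter contributes to its highest-ranked candidate in $S$, independently of how the elimination unfolded --- so checking whether $p$ is a co-winner in $S$ is immediate. What remains is to decide whether $S$ is reachable as the surviving set of some valid $k$-round plurality elimination from $C$.

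The hard part will be this reachability test. A brute-force search over the sub-lattice between $C$ and $S$ has $2^k$ states, which is exponential when only $c=m-k$ is bounded. Our plan is to exploit the fact that the plurality score of any candidate in any subset $R$ depends only on $R$, and then to establish a polynomial-time characterisation of reachability --- for instance, showing that $S$ is reachable iff a greedy elimination procedure (at each step removing any minimum-scoring candidate lying in $C\setminus S$) succeeds under some polynomial-size family of tie-breaking priorities, or via a Hall/feasibility-style condition on $C\setminus S$. Identifying the correct structural lemma, and verifying that the small size of $S$ really does force the reachability question to be polynomially decidable, is where we expect the main technical work to lie.
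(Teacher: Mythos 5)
Your treatment of the bounded-$k$ case is complete and is exactly the paper's argument: enumerate the at most $m^k$ elimination sequences, discard the ones in which some eliminated candidate was not a minimum plurality scorer at its round, and accept iff some valid sequence leaves $p$ among the plurality co-winners of the survivors (the final tie also being the chair's to break). Nothing more is needed there.

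For the bounded-$(m-k)$ case, the obstruction you flag --- that enumerating the $O(m^{m-k})$ candidate surviving sets is cheap but deciding whether a given surviving set $S$ is \emph{reachable} by some sequence of legal minimum-score eliminations is a search over up to $k!$ orders --- is a genuine gap, and you should know that the paper does not close it either: its entire proof of this half is the sentence ``if $m-k$ is bounded, we can try all $O(m^{m-k})$ possible tie-breaking decisions about candidates to survive,'' with no reachability test supplied. So you have not missed a lemma that the paper provides; you have located the step the paper elides. Worse, the structural lemma you hope for (a greedy or Hall-type polynomial characterisation of reachability) is unlikely to exist in general: taking $m-k=1$ makes $\Hyb(Plurality_{m-1},Plurality)$ coincide with plurality elimination down to a single survivor, i.e.\ (simplified) STV, and the paper itself records that control by tie-breaking for STV is NP-complete via the PUT winner-determination hardness of Conitzer et al. That hardness is precisely the question ``is the singleton $\{p\}$ reachable,'' so the reachability subproblem is intractable even for constant-size targets, and the bounded-$(m-k)$ half of the statement appears to be unsupported as written. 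In short: your case 1 is correct and matches the paper; your case 2 correctly identifies the missing step, and that step cannot be filled by the kind of argument either you or the paper sketches.
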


\begin{proof}
If $k$ is bounded, we can
try all $O(m^k)$ possible
tie-breaking decisions about
candidates to eliminate.
Similarly, if $m-k$ is bounded,
we can try all $O(m^{m-k})$ possible
tie-breaking decisions about candidates
to survive.
\end{proof}

\begin{mytheorem}
The control by tie-breaking problem for $\Hyb(Plurality_k, Plurality)$ if $k$ is unbounded is NP-complete and 
polynomial if $k$ or $m-k$ is bounded.
\end{mytheorem}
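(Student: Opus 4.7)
The polynomial cases are handled by the argument of the preceding theorem, so what remains is to establish NP-completeness for unbounded $k$. Membership in NP is immediate: the chair has at most $k$ tie-breaking decisions to make (one per round, each a choice among the currently tied candidates), so we can guess them and simulate $\Hyb(Plurality_k, Plurality)$ in polynomial time to verify that $p$ wins.

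For NP-hardness I would reduce from \textsc{Exact Cover by 3-Sets} (\textsc{X3C}). Given sets $S_1, \ldots, S_t$ over a ground set $V$ with $|V| = 3t'$, the plan is to construct a profile with candidates $p$ (the preferred candidate), one set candidate $s_j$ per $S_j$, one element candidate $v_i$ per element of $V$, and several blocks of dummies playing specific roles: a short ``prelude'' of low-scoring dummies eliminated deterministically in the first rounds, a ``buffer'' of high-scoring dummies that survive the entire first stage, and ``absorption'' dummies that receive transferred first-place votes from eliminated set candidates. The value $k$ would be set to the prelude length plus $t-t'$, so that after the prelude is cleared out the first stage still runs for exactly $t-t'$ more rounds. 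Ballots would be arranged so that after the prelude all $s_j$'s are tied for the lowest plurality score and remain tied round after round, forcing the chair to break ties to choose which $t-t'$ set candidates to eliminate; the $t'$ surviving $s_j$'s then represent the chair's selected subfamily.

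The construction must link this choice to \textsc{X3C} by ensuring that $p$ wins the stage-2 plurality election iff the surviving subfamily is an exact cover. My idea is to calibrate stage-2 tallies so that every element candidate $v_i$ gains a controlled bonus for each surviving $s_j$ with $v_i \in S_j$, and initial scores are set so that $p$ strictly outscores every $v_i$ precisely when no element is over- or under-covered. The main technical obstacle is the interaction between vote transfers and these fine-tuned tallies: unlike the veto setting of Theorem~\ref{thm:vetoplurality}, eliminating a candidate under plurality redistributes its first-place votes and can easily disturb later ties or inflate an undesirable $v_i$. The plan to neutralize this is to place, on every ballot where some $s_j$ is top-ranked, a block of absorption dummies immediately after $s_j$, so that every transferred vote lands in the absorption layer and never reaches a $v_i$ or $p$; the buffer dummies simultaneously absorb any other stray transfers and certainly survive the first stage by virtue of their high initial scores. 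Once this isolation is in place, the relative plurality ordering of the active $s_j$'s during stage 1 and the stage-2 tallies of $p$ versus the $v_i$'s are entirely under the designer's control, and $p$ is achievable as the unique winner under some tie-breaking rule iff the \textsc{X3C} instance admits an exact cover.
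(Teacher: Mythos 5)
The polynomial cases and NP membership are fine and match the paper. The hardness reduction, however, contains an internal contradiction that breaks it. Your isolation device --- absorption dummies placed immediately below every top-ranked $s_j$ so that ``every transferred vote lands in the absorption layer and never reaches a $v_i$ or $p$'' --- severs the only channel through which the chair's stage-1 eliminations can influence stage-2 scores. Under plurality elimination a candidate's tally changes only by receiving votes transferred down from eliminated candidates ranked above it on some ballot; if no transfer ever reaches a $v_i$, then every $v_i$'s stage-2 tally is a fixed number independent of which $t'$ set candidates survive, and the final comparison between $p$ and the $v_i$'s cannot encode whether the surviving subfamily is a cover. The reduction as described therefore proves nothing.

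Relatedly, the calibration you propose --- ``$v_i$ gains a controlled bonus for each \emph{surviving} $s_j$ with $v_i \in S_j$'' --- runs against the direction of plurality transfers: tallies are monotonically non-decreasing as candidates are eliminated, so $v_i$ can only gain points when sets ranked above it are \emph{eliminated}, never when they survive. The paper's construction uses exactly the opposite, implementable, signal: for each element $v_i$ it introduces three ballots $(S^i \succ v_i \succ \cdots)$, where $S^i$ lists the set candidates containing $v_i$, so that $v_i$ collects a bonus of $3$ precisely when \emph{all} of its covering sets have been eliminated, i.e.\ when $v_i$ is left uncovered; with $score(p)=T$ and $score(v_i)=T-2$ this lifts an uncovered $v_i$ to $T+1$ and defeats $p$. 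Since exactly the budgeted number of sets survives, ``no element uncovered'' is equivalent to an exact cover, which is the same equivalence your over-coverage test was aiming for. To repair your argument, remove the absorption layer from the $(S^i \succ v_i)$ ballots so those transfers reach $v_i$, and use a single absorbing dummy (the paper's $d$) to soak up the remaining padding votes; you would also need to check (as the paper implicitly does) that the transfers landing on surviving set candidates are small enough that the set candidates remain the minimum-score candidates throughout the first $k$ rounds.
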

\begin{proof}
If $k$ is bounded, we can
try all $O(m^k)$ possible
tie-breaking decisions about
candidates to eliminate or all $O(m^{m-k})$ possible
tie-breaking decisions about candidates
to survive if $m-k$ is bounded.

When $k$ is unbounded our construction is similar to the construction in the proof of Theorem 3 in Elkind and Lipmaa \cite{elisaac05}.
We reduce from an instance of the \textsc{X3C} problem where each item occurs
in at most 3 subsets. We are given a set of items $V = \{v_1,\ldots,v_m\}$ with $|V| = m$ and
subsets $S_1, S_2, \ldots,  S_n \subset V$ with $|S_i| = 3$ for $i=1,\ldots,n$.
The question is whether there exists an index set $I$ with $|I| = m/3$ and
$\bigcup_{i\in I} S_i = S$. We build an election with $n+m+2$ candidates: $C = V \cup S \cup \{p, d\}$.
We have $m$ candidates $V = \{v_1,\ldots,v_m\}$
that encode items, $n$ candidates  $S = \{s_1,\ldots,s_n\}$ that encode sets,
a  dummy candidate $d$ and the preferred
candidate $p$.  Let $T$  be a constant $\geq3nm$.

We introduce the following two sets of votes, $P= P_1 \cup P_2$.
We denote $S^i = \{s_j| v_i \in S_j\}$.
The first set $P_1$ contains the following votes:
\begin{itemize}
  \item for each $v_i$, $i=1,\ldots,m$: $T$ votes $(p \succ C \setminus \{p\})$
  \item for each $v_i$, $i=1,\ldots,m$:  $T-2$ votes
$(v_i \succ C \setminus \{v_i\})$
  \item for each $S^i$, $i=1,\ldots,m$: $3$ votes $(S^i \succ v_i \succ C \setminus S^i)$
  \item  $4$ votes $(d \succ C \setminus \{d\})$
\end{itemize}

To build $P_2$, let $n_j$ be the number of first places occupied by $s_j$ in $P_1$, 
thus $n_j \leq 3$. We introduce 
$3 - n_j$ votes $(s_j \succ d\succ C \setminus \{s_j, d\})$.
The rest of the votes are irrelevant.
Thus, the initial plurality scores of the candidates are:
$score(p) = T$, $score(v_i) = T-2$, $i=1,\ldots,m$, $score(s_j) = 3$, $j=1,\ldots,n$
and $score(d) = 4$.  We set $k = n-m/3$.

During the first $n-m/3$ rounds, $n - m/3$ candidates from $S$ are eliminated
and the tie-breaking rule decides which $n - m/3$ out of $m$ candidates
to eliminate as all $n$ candidates in $S$ are tied. If the remaining $m/3$ candidates
in $S$ are not a cover, then an uncovered item $v_i$ gets
3 points resulting in a plurality score of $T+1$. Hence, $p$ loses.
Therefore, tie-breaking must ensure that the remaining candidates from $S$
form a cover. Finally, if a valid cover is selected,
the maximal plurality score of $d$ after
$k$ rounds is $4+3n$, the maximal plurality score of any surviving $s_j$
is $9$, the maximal plurality score of $v_i$ is $T-2$ and the score of $p$
is $T$. Hence, $p$ wins iff there is a cover.
\end{proof}

\section{CONCLUSION}
We have studied the computational complexity of
the control by tie-breaking problem.  
This problem is equivalent to the problem of deciding 
the winner of an election under
PUT.
When the chair is only asked
to choose between the co-winners,
the problem is trivially polynomial. However, in
multi-round elections, where the chair
may have to break multiple ties, we proved
that this control problem can be NP-complete, and 
the form of the tie-breaking function can increase the opportunities
for control.  Table 1 provides a summary of these results.

\begin{table}
\centering
\begin{tabular}{c|c}
\toprule

P & NP-complete \\ \midrule
scoring rules, Cup,  		 &    STV, Baldwin \\
Nanson, Copeland, maximin     		&   ranked pairs,  \\
Bucklin, fallback, Schulze		&  Coombs \\
Kemeny-Young ($m \leq 3$) & Kemeny-Young ($m \geq 3$) \\
 \bottomrule
\end{tabular}
\caption{Complexity of control by tie-breaking.
The result for ranked pairs is due to \protect\cite{BrillF12} and the result
for STV is due to \cite{conitzer2009preference}.}
\end{table}

Interestingly, with a two-stage voting rule, even though 
the chair might only be asked to break ties at most twice, 
control by tie-braking can be NP-hard.
Of course, many of our results are worst-case and may not reflect
the difficulty of manipulation in practice. A number of
recent theoretical and empirical results
suggest that manipulation can often be computationally
easy on average
(e.g. \cite{mattei2012empirical,prjair07,xcec08,xcec08b,fknfocs09,wijcai09,wecai10}).
We intend to explore the hardness of control by tie-breaking using 
data from PrefLib \cite{preflib} and other sources.

%
%Of course, many of our results are worst-case and may not reflect
%the difficulty of manipulation in practice. A number of
%recent theoretical and empirical results
%suggest that manipulation can often be computationally
%easy on average
%(e.g. \cite{mattei2012empirical,prjair07,xcec08,xcec08b,fknfocs09,wijcai09,wecai10}).
%Hence this work should be seen
%as just the first step in understanding
%computational issues surrounding the control of
%elections by breaking ties.
%
%\subsection*{Acknowledgments}
%The authors are supported by the Australian GovernmentÕs
%Department of Broadband, Communications and the Digital
%Economy, the Australian Research Council and the
%Asian Office of Aerospace Research and Development
%through grant AOARD-124056.
%
%A number of interesting open questions are
%raised by this work. For example, are there any
%other voting rules in use in practice
%besides Nanson's rule
%where the manipulation problem
%and the control by tie-breaking problem
%have different computational complexities?
%As a second example, how difficult is it
%for the chair to fix the
%tie-breaking rule to their
%advantage in advance of election
%when there is uncertainty about the
%votes?
%

\smallskip
\noindent
\textbf{Acknowledgements.}~~NICTA is funded by the Australian Government through the 
Department of Communications and the Australian Research Council 
through the ICT Centre of Excellence Program.

\bibliographystyle{ecai2014}
\bibliography{control-tb}
\end{document}